\theoremstyle{plain}
\newtheorem{theorem}{Theorem}
\newtheorem{proposition}{Proposition}
\newtheorem{lemma}{Lemma}
\theoremstyle{definition}
\newtheorem{assumption}{Assumption}
\theoremstyle{remark}
\newtheorem{remark}{Remark}
\theoremstyle{plain}
\newcommand{\Xcal}{\mathcal{X}}
\newcommand{\Ycal}{\mathcal{Y}}
\newcommand{\Scal}{\mathcal{S}}
\newcommand{\Ocal}{\mathcal{O}}
\newcommand{\Fcal}{\mathcal{F}}
\newcommand{\Ebb}{\mathbb{E}}
\newcommand{\Rbb}{\mathbb{R}}
\newcommand{\Ibb}{\mathbb{I}}
\newcommand{\GNN}{\Gamma_{\mathrm{NN}}}
\newcommand{\FNN}{\mathcal{F}_{\mathrm{NN}}}
\newcommand{\Pdim}{\mathrm{Pdim}}
\newcommand{\ubf}{\textbf{u}}
\newcommand{\vbf}{\textbf{v}}
\newcommand{\Egen}{\mathcal{E}_{gen}(\GNN)}
\newcommand{\Enp}{\mathcal{E}_\text{noise,proj}}
\newcommand{\red}[1]{\textcolor{black}{#1}}
\title{Deep Operator Learning Lessens the Curse of Dimensionality for PDEs
}
\author{
  Ke Chen \\
  Department of Mathematics \\
  University of Maryland College Park \\
  College Park, MD, USA\\
  \texttt{kechen@umd.edu} \\
   \And
  Chunmei Wang \\
  Department of Mathematics \\
  University of Florida \\
  Gainesville\\
  \texttt{chunmei.wang@ufl.edu} \\
  \And
  Haizhao Yang \\
  Department of Mathematics \\
  University of Maryland College Park \\
  College Park, MD, USA\\
  \texttt{hzyang@umd.edu} \\
}
\begin{document}

\maketitle

\begin{abstract}
Deep neural networks (DNNs) have achieved remarkable success in numerous domains, and their application to PDE-related problems has been rapidly advancing. This paper provides an estimate for the generalization error of learning Lipschitz operators over Banach spaces using DNNs with applications to various PDE solution operators. The goal is to specify DNN width, depth, and the number of training samples needed to guarantee a certain testing error. 
Under mild assumptions on data distributions or operator structures, our analysis shows that deep operator learning can have a relaxed dependence on the discretization resolution of PDEs and, hence, lessen the curse of dimensionality in many PDE-related problems including elliptic equations, parabolic equations, and Burgers equations. Our results are also applied to give insights about discretization-invariance in operator learning.
\end{abstract}

\section{Introduction}
\label{sec:introduction}
Nonlinear operator learning aims to learn a mapping from a parametric function space to the solution space of specific partial differential equation (PDE) problems. It has gained significant importance in various fields, including order reduction~\cite{peherstorfer2016data}, parametric PDEs~\cite{lu2021learning,li2021fourier}, inverse problems~\cite{khoo2019switchnet}, and imaging problems~\cite{deng2020learning,qiao2021evaluation,tian2020deep}. Deep neural networks (DNNs) have emerged as state-of-the-art models in numerous machine learning tasks~\cite{graves2013speech,miotto2018deep,krizhevsky2017imagenet}, attracting attention for their applications to engineering problems where PDEs have long been the dominant model. Consequently, deep operator learning has emerged as a powerful tool for nonlinear PDE operator learning~\cite{lanthaler2022error,li2021fourier,nelsen2021random,khoo2019switchnet}. The typical approach involves discretizing the computational domain and representing functions as vectors that tabulate function values on the discretization mesh. A DNN is then employed to learn the map between finite-dimensional spaces. While this method has been successful in various applications~\cite{lin2021operator,cai2021deepm}, its computational cost is high due to its dependence on the mesh. This implies that retraining of the DNN is necessary when using a different domain discretization. To address this issue, \cite{li2021fourier,LU2022114778,ong2022integral} have been proposed for problems with sparsity structures and discretization-invariance properties.
Another line of works for learning PDE operators are generative models, including Generative adversarial models (GANs) and its variants \cite{rahman2022generative,botelho2020deep,kadeethum2021framework} and diffusion models \cite{wang2023generative}. These methods can deal with discontinuous features, whereas  neural network based methods are mainly applied to operators with continuous input and output. However, most of generative models for PDE operator learning are limited to empirical study and theoretical foundations are in lack.

Despite the empirical success of deep operator learning in numerous applications, its statistical learning theory is still limited, particularly when dealing with infinite-dimensional ambient spaces. The learning theory generally comprises three components: approximation theory, optimization theory, and generalization theory. Approximation theory quantifies the expressibility of various DNNs as surrogates for a class of operators. The universal approximation theory for certain classes of functions~\cite{cybenko1989approximation,hornik1991approximation} forms the basis of the approximation theory for DNNs. It has been extended to other function classes, such as continuous functions~\cite{shen2019deep,yarotsky2021elementary,shen2021deep}, certain smooth functions~\cite{yarotsky2018optimal,lu2021deep,suzuki2018adaptivity,adcock2022near}, and functions with integral representations~\cite{barron1993universal,ma2022barron}. However, compared to the abundance of theoretical works on approximation theory for high-dimensional functions, the approximation theory for operators, especially between infinite-dimensional spaces, is quite limited. Seminal quantitative results have been presented in~\cite{kovachki2021universal,lanthaler2022error}.

In contrast to approximation theory, generalization theory aims to address the following question:

\textit{How many training samples are required to achieve a certain testing error?}

This question has been addressed by numerous statistical learning theory works for function regression using neural network structures~\cite{bauer2019deep,chen2022nonparametric,farrell2021deep,kohler2005adaptive,liu2021besov,nakada2020adaptive,schmidt2020nonparametric}. In a $d$-dimensional learning problem, the typical error decay rate is on the order of $n^{-\mathcal{O}\left(1/d\right)}$ as the number of samples $n$ increases. The fact that the exponent is very small for large dimensionality $d$ is known as the \textit{curse of dimensionality} (CoD)~\cite{stone1982optimal}.
Recent studies have demonstrated that  DNNs can achieve faster decay rates when dealing with target functions or function domains that possess low-dimensional structures~\cite{chen2019efficient,chen2022nonparametric,cloninger2020relu,nakada2020adaptive,schmidt2019deep,shen2019deep}. In such cases, the decay rate becomes independent of the domain discretization, thereby lessening the  CoD ~\cite{bauer2019deep,chkifa2015breaking,suzuki2018adaptivity}.
However, it is worth noting that most existing works primarily focus on functions between finite-dimensional spaces. To the best of our knowledge, previous results \cite{de2021convergence,lanthaler2022error,lu2021learning,liu2022deep} provide the only generalization analysis for infinite-dimensional functions.  
Our work extends the findings of~\cite{liu2022deep} by generalizing them to Banach spaces and conducting new analyses within the context of  PDE problems. 
The removal of the inner-product assumption is crucial in our research, enabling us to apply the estimates to various PDE problems where previous results do not apply. This is mainly because the suitable space for functions involved in most practical PDE examples are Banach spaces where the inner-product is not well-defined. Examples include the conductivity media function in the parametric elliptic equation, the drift force field in the transport equation, and the solution to the viscous Burgers equation that models continuum fluid. See more details in Section \ref{sec:explicit_bounds}.

\subsection{Our contributions}

The main objective of this study is to investigate the reasons behind the reduction of the CoD in PDE-related problems achieved by deep operator learning. We observe that many PDE operators exhibit a composition structure consisting of linear transformations and element-wise nonlinear transformations with a small number of inputs.  DNNs  are particularly effective in learning such structures due to their ability to evaluate networks point-wise. We provide an analysis of the approximation and generalization errors and apply it to various PDE problems to determine the extent to which the CoD can be mitigated. Our contributions can be summarized as follows:
\begin{itemize}[label=$\circ$]
\item Our work provides a theoretical explanation to why CoD is lessened in PDE operator learning. We extend the generalization theory in \cite{liu2022deep} from Hilbert spaces to Banach spaces, and apply it to several PDE examples. Such extension holds great significance as it overcomes a limitation in previous works, which primarily focused on Hilbert spaces and therefore lacked applicability in machine learning for practical PDEs problems. Comparing to \cite{liu2022deep}, our estimate circumvented the inner-product structure at the price of a non-decaying noise estimate. This is a tradeoff of accuracy for generalization to Banach space.
Our work tackles a broader range of PDE operators that are defined on Banach spaces.
In particular, five PDE examples are given in Section \ref{sec:explicit_bounds} whose solution spaces are not Hilbert spaces.
\item Unlike existing works such as \cite{lanthaler2022error}, which only offer posterior analysis, we provide an a priori estimate for PDE operator learning. Our estimate does not make any assumptions about the trained neural network and explicitly quantifies the required number of data samples and network sizes based on a given testing error criterion. Furthermore, we identify two key structures—low-dimensional and low-complexity structures (described in assumptions \ref{assump:manifold} and \ref{assump:low_complexity}, respectively)—that are commonly present in PDE operators. We demonstrate that both structures exhibit a sample complexity that depends on the essential dimension of the PDE itself, weakly depending on the PDE discretization size. This finding provides insights into why deep operator learning effectively mitigates the CoD.
\item  Most operator learning theories consider fixed-size neural networks. However, it is important to account for neural networks with discretization invariance properties, allowing training and evaluation on PDE data of various resolutions. Our theory is flexible and can be applied to derive error estimates for discretization invariant neural networks.
 
\end{itemize}
\subsection{Organization}
In Section \ref{sec:setup}, we introduce the neural network structures and outline the assumptions made on the PDE operator. Furthermore, we present the main results for generic PDE operators, and PDE operators that have low-dimensional structure or low-complexity structure. At the end of the section, we show that the main results are also valid for discretization invariant neural networks.
In Section \ref{sec:explicit_bounds}, we show that the assumptions are satisfied and provide explicit estimates for five different PDEs. 
Finally, in Section \ref{sec:conclusion}, we discuss the limitations of our current work.

\section{Problem setup and main results}
\label{sec:setup}

{\bf Notations.} In a general Banach space $\mathcal{X}$, we represent its associated norm as $\|\cdot\|_{\mathcal{X}}$. Additionally, we denote $E^n_{\mathcal{X}}$ as the encoder mapping from the Banach space $\mathcal{X}$ to a Euclidean space $\mathbb{R}^{d_{\mathcal{X}}}$, where $d_{\mathcal{X}}$ denotes the encoding dimension. Similarly, we denote the decoder for $\mathcal{X}$ as $D^n_\Xcal: \Rbb^{d_\Xcal} \rightarrow \Xcal$. The $\Omega$ notations for neural network parameters in the main results section \ref{sec:main_results} denotes a lower bound estimate, that is, $x = \Omega(y)$ means there exists a constant $C>0$ such that $x \geq Cy$. The $\mathcal{O}$ notation denotes an upper bound estimate, that is, $x = \Ocal(y)$ means there exists a constant $C>0$ such that $x \leq Cy$.

\subsection{Operator learning and loss functions}

We consider a general nonlinear PDE operator $\Phi: \Xcal \ni u\mapsto v\in \Ycal$ over Banach spaces $\Xcal$ and $\Ycal$. In this context, the input variable $u$ typically represents the initial condition, the boundary condition, or a source of a specific PDE, while the output variable $v$ corresponds to the PDE solution or partial measurements of the solution. 
Our objective is to train a  DNN  denoted as $\phi(u;\theta)$ to approximate the target nonlinear operator $\Phi$ using a given data set $\Scal = \{ (u_i,v_i), v_i = \Phi(u_i) + \varepsilon_i,i=1,\ldots,2n \}$. 
The data set $\Scal$ is divided into $\Scal^n_1=\{ (u_i,v_i), v_i = \Phi(u_i) + \varepsilon_i, i=1,\ldots,n \}$ that is used to train the encoder and decoders, and a training data set $\Scal^n_2=\{ (u_i,v_i), v_i = \Phi(u_i) + \varepsilon_i, i=n+1,\ldots,2n \}$.  Both $\mathcal{S}_1^n$ and $\mathcal{S}_2^n$ are generated independently and identically distributed (i.i.d.) from a random measure $\gamma$ over $\mathcal{X}$, with $\varepsilon_i$ representing random noise.

\begin{figure}[ht]
\centering
\includegraphics[width=0.6\columnwidth]{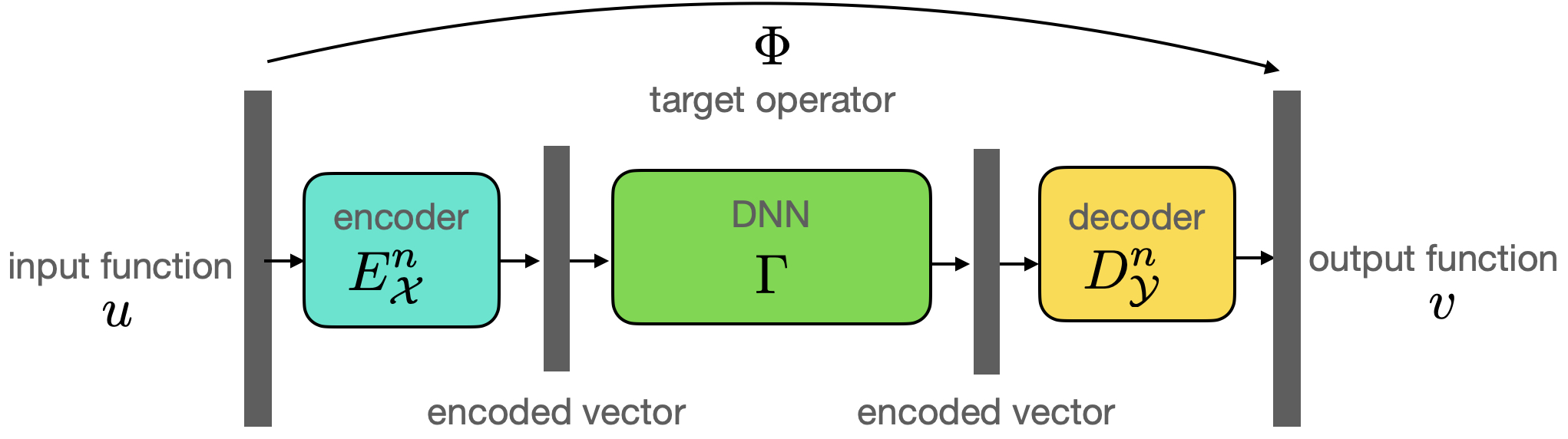}
\vskip -0.25cm
\caption{The target nonlinear operator $\Phi: u \mapsto v$ is approximated by compositions of an encoder $E_\Xcal^n$, a DNN function $\Gamma$, and a decoder $D_\Ycal^n$. The finite dimensional operator $\Gamma$ is learned via the optimization problem \eqref{eqn:optimization}.}
\label{fig:operator}
\end{figure}

In practical implementations, DNNs operate on finite-dimensional spaces. Therefore, we utilize empirical encoder-decoder pairs, namely $E^n_\Xcal: \Xcal\rightarrow\Rbb^{d_\Xcal}$ and \red{$D^n_\Xcal: \Rbb^{d_\Xcal}\rightarrow \Xcal$}, to discretize $u \in \mathcal{X}$.  Similarly, we employ empirical encoder-decoder pairs, $E^n_\Ycal: \Ycal\rightarrow\Rbb^{d_\Ycal}$ and \red{$D^n_\Ycal: \Rbb^{d_\Ycal}\rightarrow \Ycal$}, for $v\in\Ycal$. These encoder-decoder pairs are trained using the available data set  $\Scal^n_1$ or manually designed such that  $D_\Xcal^n \circ E_\Xcal^n \approx \Ibb_{d_\Xcal}$ and $D_\Ycal^n \circ E_\Ycal^n \approx \Ibb_{d_\Ycal}$. A common example of empirical encoders and decoders is the discretization operator, which maps a function to a vector representing function values at discrete mesh points. Other examples include finite element projections and spectral methods, which map functions to coefficients of corresponding basis functions. Our goal is to approximate the encoded PDE operator using a finite-dimensional operator $\Gamma$ so that $\Phi \approx D^n_\Ycal \circ \Gamma \circ E^n_\Xcal$. Refer to Figure \ref{fig:operator} for an illustration. This approximation is achieved by solving the following optimization problem:
\begin{equation}\label{eqn:optimization}
	\GNN \in \underset{\Gamma\in 
	\FNN}{\mathrm{argmin}} \frac{1}{n} \sum_{i=1}^{n} \| \Gamma\circ E_\mathcal{X}^n(u_i) -E_\mathcal{Y}^n(v_i)\|_2^2.
\end{equation}
Here the function space $\FNN$ represents a collection of rectified linear unit (ReLU) feedforward DNNs denoted as $f(x)$, which are defined as follows:
\begin{equation}\label{eqn:fnn} 
     f(x)  =  W_L\phi_{L-1} \circ \phi_{L-2} \circ\cdots \circ \phi_1(x) + b_L\,,\quad \phi_i(x) \coloneqq \sigma(W_i x + b_i) \,, i =1,\ldots,L-1 \,,
\end{equation}
where $\sigma$ is the ReLU activation function  $\sigma(x) = \max\{x,0\}$, and $W_i$ and $b_i$ represent weight matrices and bias vectors, respectively. The ReLU function is evaluated pointwise on all entries of the input vector. 
In practice, the functional space $\FNN$ is selected as a compact set comprising all ReLU feedforward DNNs. This work investigates two distinct architectures within $\FNN$. The first architecture within $\FNN$ is defined as follows:
\begin{equation}\label{eqn:FNN1}
	\begin{aligned}
		&\FNN(d,L,p,K,\kappa,M)=\{\Gamma=[f_1, f_2,...,f_{d}]^{\top}:\mbox{ for each }k=1,...,d,f_k(x) \mbox{ is in the form of (\ref{eqn:fnn}) } \\
		&\mbox{with  }L \mbox{ layers, width bounded by } p, 
		\|f_k\|_{\infty}\leq M, \ \|W_l\|_{\infty,\infty}\leq \kappa,  
		\|b_l\|_{\infty}\leq \kappa,\  \sum_{l=1}^L \|W_l\|_0+\|b_l\|_0\leq K   \},\\
	\end{aligned}
\end{equation}
where
$\|f\|_{\infty}=\max_{x} |f(x)|$,
$\|W\|_{\infty,\infty}=\max_{i,j} |W_{i,j}|$, $\|b\|_{\infty}=\max_i |b_i|$ for any function $f$, matrix $W$, and vector $b$ with $\|\cdot\|_0$ denoting the number of nonzero elements of its argument. The functions in this architecture satisfy parameter bounds with limited cardinalities.
The second architecture  relaxes some of the constraints compared to the first architecture; i.e., 
\begin{equation}\label{eqn:FNN2}
	\begin{aligned}
		&\FNN(d,L,p,M)=\{\Gamma=[f_1, f_2,...,f_{d}]^{\top}: 
		\mbox{ for each }k=1,...,d\,,f_k(x) \mbox{ is in the form of (\ref{eqn:fnn}) } \\
		&\mbox{with  }L \mbox{ layers, width bounded by } p, 
		\|f_k\|_{\infty}\leq M\}.
	\end{aligned}
\end{equation} 

When there is no ambiguity, we  use the notation  $\FNN$ and omit its associated parameters.

We consider the following assumptions on the target PDE map $\Phi$, the encoders $E_\Xcal^n,E_\Ycal^n$, the decoders $D_\Xcal^n,D_\Ycal^n$, and the data set $\Scal$ in our theoretical framework.

\begin{assumption}[Compactly supported measure]\label{assump:compact_supp}
	The probability measure $\gamma$ is supported on a compact set $\Omega_\Xcal\subset \mathcal{X}$. For any $u\in\Omega_\Xcal$, there exists $R_\mathcal{X}>0$ such that $\|u\|_\mathcal{X} \leq R_\mathcal{X}.$
Here, $\|\cdot\|_\mathcal{X}$ denotes the associated norm of the  space $\mathcal{X}$.
\end{assumption}

\begin{assumption}[Lipschitz operator]\label{assump:Lipschitz}
	There exists $L_\Phi>0$ such that for any $u_1,u_2 \in \Omega_\mathcal{X}$, 
	\[
	\| \Phi(u_1) - \Phi(u_2)\|_\mathcal{Y} \leq L_\Phi \| u_1 - u_2 \|_\mathcal{X}.
	\]
Here, $\|\cdot\|_\mathcal{Y}$ denotes the associated norm of the  space $\mathcal{Y}$.\end{assumption}
\begin{remark}
Assumption \ref{assump:compact_supp} and Assumption \ref{assump:Lipschitz} imply that the images $v = \Phi(u)$ are bounded by $R_\Ycal \coloneqq L_\Phi R_\Xcal$ for all $u\in\Omega_\Xcal$.
 The Lipschitz constant $L_\Phi$ will be explicitly computed in Section \ref{sec:explicit_bounds} for different PDE operators.
\end{remark}

\begin{assumption}[Lipschitz encoders and decoders]\label{assump:lip_enco}
	The empirical encoders and decoders $E_\mathcal{X}^n,D_\mathcal{X}^n,E_\mathcal{Y}^n,D_\mathcal{Y}^n$ satisfy the following properties:
	\[
	E_\mathcal{X}^n(0_\mathcal{X}) = \mathbf{0}\,, D_\mathcal{X}^n(\mathbf{0}) = 0_\mathcal{X}\,,E_\mathcal{Y}^n(0_\mathcal{Y}) = \mathbf{0}\,, D_\mathcal{Y}^n(\mathbf{0}) = 0_\mathcal{Y}\,,  
	\]
	where $\mathbf{0}$ denotes the zero vector and $0_\mathcal{X},0_\mathcal{Y}$ denote the zero function in $\mathcal{X}$ and $\mathcal{Y}$, respectively.
 Moreover, we assume all empirical encoders are Lipschitz operators such that
 \[
 \| E^n_\mathcal{P} u_1 - E^n_\mathcal{P} u_2\|_2 \leq L_{E^n_\mathcal{P}} \| u_1-u_2 \|_\mathcal{P} \,, \quad \mathcal{P} = \Xcal,\Ycal\,,
 \]
 where $\|\cdot\|_2$ denotes the Euclidean $L^2$ norm, $\|\cdot\|_\mathcal{P}$ denotes the associated norm of the Banach space $\mathcal{P}$, and $L_{E^n_\mathcal{P}}$ is the Lipschitz constant of the encoder $E^n_\mathcal{P}$. Similarly, we also assume that the decoders $D^n_\mathcal{P},\mathcal{P} = \Xcal,\Ycal$ are also Lipschitz with constants $L_{D^n_\mathcal{P}}$.
\end{assumption}
\begin{assumption}[Noise]\label{assump:noise}
	For  $i=1,\ldots,2n$,  the noise $\varepsilon_i$ satisfies    
	\begin{enumerate}
		\item $\varepsilon_i$ is independent of $u_i$;
		\item $\Ebb[\varepsilon_i] = 0$;
		\item There exists $\sigma>0$ such that $\|\varepsilon_i\|_\mathcal{Y}\leq \sigma$.
	\end{enumerate}
\end{assumption}
\begin{remark}
    The above assumptions on the noise and Lipschitz encoders imply that $\| E^n_\Ycal (\Phi(u_i)+\varepsilon_i) - E^n_\Ycal(\Phi(u)) \|_2 \leq L_{E^n_\Ycal}\sigma $.
\end{remark}


\subsection{Main Results} \label{sec:main_results}
For a trained neural network $\GNN$ over the data set $\Scal$, we denote its generalization error as
\[
\Egen \coloneqq \Ebb_\Scal \Ebb_{u\sim \gamma} \left[ \| D^n_\Ycal \circ \GNN \circ E_\Xcal^n(u) - \Phi(u)\|_\Ycal^2 \right]  \,.
\]
Note that we omit its dependence on $\Scal$ in the notation. We also define the following quantity,
\[
\Enp \coloneqq L^2_\Phi  \Ebb_\Scal \Ebb_{u}  \left[  \| \Pi_{\Xcal,d_\Xcal}^n(u)-u \|_\Xcal^2\right]  + \Ebb_\Scal \Ebb_{w\sim \Phi_{\#}\gamma} \left[ \| \Pi_{\Ycal,d_\Ycal}^n(w) -w \|_\Ycal^2\right] 
			+ \sigma^2 + n^{-1} \,,
\]
where $\Pi_{\Xcal,d_\Xcal}^n \coloneqq D_\Xcal^n \circ E_\Xcal^n$ and $\Pi_{\Ycal,d_\Ycal}^n \coloneqq D_\Ycal^n \circ E_\Ycal^n$ denote the encoder-decoder projections on $\Xcal$ and $\Ycal$ respectively.
Here the first term shows that the encoder/decoder projection error $\Ebb_\Scal \Ebb_{u}  \left[  \| \Pi_{\Xcal,d_\Xcal}^n(u)-u \|_\Xcal^2\right]$ for $\Xcal$ is amplified by the Lipschitz constant; the second term is the encoder/decoder projection error for $\Ycal$; the third term stands for the noise; and the last term is a small quantity $n^{-1}$. It will be shown later that this quantity appears frequently in our main results.
\begin{theorem}\label{thm:FNN1}
	Suppose Assumptions \ref{assump:compact_supp}-\ref{assump:noise} hold. Let $\GNN$ be the minimizer of the optimization problem \eqref{eqn:optimization}, with the network architecture $\FNN(d_\Ycal,L,p,K,\kappa,M)$ defined in \eqref{eqn:FNN1} with parameters 
	\begin{equation*}
		\begin{aligned}
			& L = \Omega\left(\ln(\frac{n}{d_\Ycal})\right)\,,
			\quad p = \Omega\left(d_\Ycal^{\frac{2-d_\Xcal}{2+d_\Xcal}} n^{\frac{d_\Xcal}{2+d_\Xcal}}
			\right)\,, \\
			& K = \Omega(pL )\,, \quad \kappa = \Omega(M^2) \,,\quad  M \geq \sqrt{d_\Ycal} L_{E_\Ycal^n} R_\Ycal \,,
		\end{aligned}
	\end{equation*}
where the notation $\Omega$ contains constants  that solely depend  on $L_{E^n_\Ycal},L_{D^n_\Ycal},L_{E^n_\Xcal},L_{D^n_\Xcal},R_\Xcal$ and $d_\Xcal$.
	Then there holds 
	\begin{equation*}
		\begin{aligned}
			\Egen &\lesssim  d_\Ycal^{\frac{6+d_\Xcal}{2+d_\Xcal}}n^{-\frac{2}{2+d_\Xcal}}  (1+L_\Phi^{2-d_\Xcal}) \left( \ln^3 \frac{n}{d_\Ycal} + \ln^2 n \right) + \Enp \,.
		\end{aligned}
	\end{equation*}
	Here $\lesssim$ contains constants that solely depend on $L_{E^n_\Ycal},L_{D^n_\Ycal},L_{E^n_\Xcal},L_{D^n_\Xcal},R_\Xcal$ and $d_\Xcal$. 
\end{theorem}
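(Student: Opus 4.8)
The plan is the classical three–way split: \emph{discretisation error} $+$ \emph{network approximation error} $+$ \emph{statistical (estimation) error}, organised so that the first collapses into $\Enp$ and the last two reduce to a finite–dimensional nonparametric regression problem on the latent space $\Rbb^{d_\Xcal}$, into which the prescribed widths and depths are then inserted. First I would condition on $\Scal_1^n$, so that the encoder–decoder pairs $E_\Xcal^n,D_\Xcal^n,E_\Ycal^n,D_\Ycal^n$ are fixed Lipschitz maps and $\GNN$ is trained on the independent sample $\Scal_2^n$. Writing $\Gamma^{\ast}\coloneqq E_\Ycal^n\circ\Phi\circ D_\Xcal^n:\Rbb^{d_\Xcal}\to\Rbb^{d_\Ycal}$, the triangle inequality and Assumption~\ref{assump:lip_enco} give
\[
\|D_\Ycal^n\GNN E_\Xcal^n u-\Phi(u)\|_\Ycal\le L_{D_\Ycal^n}\|\GNN(E_\Xcal^n u)-\Gamma^{\ast}(E_\Xcal^n u)\|_2+L_{E_\Ycal^n}L_\Phi\|\Pi_{\Xcal,d_\Xcal}^n u-u\|_\Xcal+\|\Pi_{\Ycal,d_\Ycal}^n\Phi(u)-\Phi(u)\|_\Ycal .
\]
Squaring, applying $(a+b+c)^2\le3(a^2+b^2+c^2)$ and taking $\Ebb_\Scal\Ebb_u$, the last two terms are exactly the first two summands of $\Enp$, so it remains to control $\Ebb_\Scal\Ebb_u\|\GNN(E_\Xcal^n u)-\Gamma^{\ast}(E_\Xcal^n u)\|_2^2$. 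Here $\Gamma^{\ast}$ is Lipschitz with constant $L_{E_\Ycal^n}L_\Phi L_{D_\Xcal^n}$, its domain $E_\Xcal^n(\Omega_\Xcal)$ sits in a ball of radius $L_{E_\Xcal^n}R_\Xcal$, and the constraint $M\ge\sqrt{d_\Ycal}L_{E_\Ycal^n}R_\Ycal$ guarantees (after a McShane/Kirszbraun extension off $E_\Xcal^n(\Omega_\Xcal)$ and, if needed, clipping) that $\Gamma^{\ast}$ and its network surrogate are feasible for \eqref{eqn:optimization}.

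Next I would run the empirical–risk–minimisation argument on this $d_\Ycal$–output regression problem with latent data $z_i=E_\Xcal^n(u_i)$, $y_i=E_\Ycal^n(\Phi(u_i)+\varepsilon_i)$. With $R(\Gamma)=\Ebb\|\Gamma(z)-y\|_2^2$ and $\widehat R$ its empirical version over the training sample, a Young's–inequality identity yields
\[
\Ebb_z\|\GNN(z)-\Gamma^{\ast}(z)\|_2^2\le 2\big(R(\GNN)-R(\Gamma^{\ast})\big)+4\,\Ebb\|y-\Gamma^{\ast}(z)\|_2^2 .
\]
This is where the Banach setting differs from the Hilbert one: since $E_\Ycal^n$ is only Lipschitz, the latent residual $y-\Gamma^{\ast}(z)=E_\Ycal^n(\Phi(u)+\varepsilon)-E_\Ycal^n(\Phi(D_\Xcal^nE_\Xcal^n u))$ is \emph{not} conditionally mean zero; it is merely bounded, $\|y-\Gamma^{\ast}(z)\|_2\le L_{E_\Ycal^n}\sigma+L_{E_\Ycal^n}L_\Phi\|\Pi_{\Xcal,d_\Xcal}^n u-u\|_\Xcal$ (remark after Assumption~\ref{assump:noise}), so the cross term cannot be cancelled and leaves a non-decaying $\sigma^2$ (plus the $\Xcal$–projection term) — this is precisely the $\sigma^2$ appearing in $\Enp$. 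For $R(\GNN)-R(\Gamma^{\ast})$ I would insert a near-best surrogate $\widetilde\Gamma\in\FNN$ of $\Gamma^{\ast}$ and split into (i) $\widehat R(\GNN)-\widehat R(\widetilde\Gamma)\le0$, (ii) the approximation error $R(\widetilde\Gamma)-R(\Gamma^{\ast})\lesssim\|\widetilde\Gamma-\Gamma^{\ast}\|_\infty^2+\Ebb\|y-\Gamma^{\ast}(z)\|_2^2$, and (iii) the uniform deviation $\sup_{\Gamma\in\FNN}\big(R(\Gamma)-\widehat R(\Gamma)\big)$ together with the concentration of $\widehat R(\widetilde\Gamma)-R(\widetilde\Gamma)$.

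For (ii) I would invoke a ReLU approximation bound for Lipschitz functions on a bounded $d_\Xcal$–dimensional domain: for width $p$ and depth $L=\Omega(\ln(n/d_\Ycal))$ it gives $\|\widetilde\Gamma-\Gamma^{\ast}\|_\infty=\mathcal{O}\big(\sqrt{d_\Xcal}\,L_{E_\Ycal^n}L_\Phi L_{D_\Xcal^n}(pL)^{-2/d_\Xcal}\big)$ up to logarithmic factors, the explicit network also respecting the magnitude bound $\kappa=\Omega(M^2)$ and the cardinality bound $K=\Omega(pL)$. For (iii) I would bound the uniform deviation by covering-number / pseudo-dimension estimates for the class \eqref{eqn:FNN1}, whose complexity scales like $d_\Ycal K L\ln K$, together with a Bernstein/localisation step (legitimate since $\widehat R(\GNN)$ is of the order of the noise floor) to obtain the fast rate $\mathcal{O}\big(M^2 d_\Ycal K L\ln K\cdot\ln n/n\big)$ rather than its square root, the residual $n^{-1}$ being absorbed into $\Enp$. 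Combining (i)–(iii) with the bias/variance identity, substituting $K\asymp pL$, $M^2\asymp d_\Ycal L_\Phi^2$ (times constants depending only on the encoder/decoder constants and $R_\Xcal$) and $p\asymp d_\Ycal^{(2-d_\Xcal)/(2+d_\Xcal)}n^{d_\Xcal/(2+d_\Xcal)}$, the approximation and estimation contributions balance at order $d_\Ycal^{(6+d_\Xcal)/(2+d_\Xcal)}n^{-2/(2+d_\Xcal)}(1+L_\Phi^{2-d_\Xcal})$ modulo the logarithmic factors $\ln^3(n/d_\Ycal)+\ln^2 n$, and finally taking $\Ebb_{\Scal_1^n}$ turns the leftover projection terms into the remaining summands of $\Enp$.

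I expect the main obstacle to be step (iii) together with the final substitution: establishing the \emph{fast} (pseudo-dimension over $n$) rate uniformly over the multi-output, sparsely parameterised, weight-bounded class \eqref{eqn:FNN1} via localisation, and then tracking the joint dependence on $d_\Xcal,d_\Ycal,L,M$ and $L_\Phi$ precisely enough that inserting the prescribed network sizes reproduces the stated exponents — in particular the $d_\Ycal^{(6+d_\Xcal)/(2+d_\Xcal)}$ and $(1+L_\Phi^{2-d_\Xcal})$ factors, which emerge only from the interplay between the Lipschitz constant of $\Gamma^{\ast}$ in the approximation bound and the uniform bound $M\gtrsim\sqrt{d_\Ycal}L_\Phi R_\Xcal$ in the complexity bound. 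A secondary technical point is verifying feasibility/clipping of $\Gamma^{\ast}$ and $\widetilde\Gamma$ for the constrained problem \eqref{eqn:optimization} and the Lipschitz extension off $E_\Xcal^n(\Omega_\Xcal)$.
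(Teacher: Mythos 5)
Your proposal follows essentially the same route as the paper's proof: the same projection/approximation/estimation decomposition (the paper's terms $\mathrm{I}=T_1+T_2$ and $\mathrm{II}$), the same ERM oracle inequality with the noise separated by a triangle inequality so that only a bounded, non-centered residual remains (hence the non-decaying $\sigma^2$ in $\Enp$), component-wise ReLU approximation of the Lipschitz latent map $\Gamma^*=E^n_\Ycal\circ\Phi\circ D^n_\Xcal$, a covering-number/pseudo-dimension fast-rate bound for the deviation term, and the final balancing with the prescribed $p,L,K,\kappa,M$.

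The one step I would push back on is the approximation lemma you invoke. For the constrained class \eqref{eqn:FNN1} (sparsity $K$, weight magnitudes bounded by $\kappa$), the paper uses Yarotsky's construction (Lemma \ref{lemma:yarotsky_new}), which delivers accuracy $\varepsilon_1$ with width $\tilde p=\Omega(\varepsilon_1^{-d_\Xcal})$ per component, depth $\Omega(\ln\varepsilon_1^{-1})$, \emph{and} the explicit bounds on $K$, $\kappa$, $M$ that feed directly into the covering-number estimate for $T_2$. Your claimed rate $\|\tilde\Gamma-\Gamma^*\|_\infty=\mathcal{O}((pL)^{-2/d_\Xcal})$ is the Shen--Yang--Zhang-type bound (Lemma \ref{lem:shen2019}) that the paper reserves for the unconstrained architecture \eqref{eqn:FNN2} in Theorem \ref{thm:FNN2}; those width--depth-optimal constructions do not control weight magnitudes, so the assertion that the surrogate "also respects $\kappa=\Omega(M^2)$ and $K=\Omega(pL)$" is not justified as stated and is the only place where your argument, taken literally, would not go through for \eqref{eqn:FNN1}. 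The fix is simply to substitute Yarotsky's lemma: with the prescribed $p=\Omega\bigl(d_\Ycal^{\frac{2-d_\Xcal}{2+d_\Xcal}}n^{\frac{d_\Xcal}{2+d_\Xcal}}\bigr)$ and $L=\Omega(\ln(n/d_\Ycal))$ its (weaker) accuracy $\varepsilon_1\sim d_\Ycal^{\frac{2}{2+d_\Xcal}}n^{-\frac{1}{2+d_\Xcal}}$ already balances against the estimation term, which is what produces the stated $d_\Ycal^{\frac{6+d_\Xcal}{2+d_\Xcal}}n^{-\frac{2}{2+d_\Xcal}}$ rate, the $(1+L_\Phi^{2-d_\Xcal})$ factor, and the $\ln^3(n/d_\Ycal)+\ln^2 n$ logarithms; otherwise your bookkeeping for the deviation term (pseudo-dimension $\sim d_\Ycal KL\ln K$ with $M^2\asymp d_\Ycal L_{E^n_\Ycal}^2R_\Ycal^2$) matches the paper's use of Lemmas 6--7 of the cited reference.
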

\begin{theorem}\label{thm:FNN2}
	Suppose Assumptions \ref{assump:compact_supp}-\ref{assump:noise} hold ture. Let $\GNN$ be the minimizer of the optimization problem \eqref{eqn:optimization} with the network architecture $\Fcal_\mathrm{NN}(d_\Ycal,L,p,M)$ defined in \eqref{eqn:FNN2} with parameters,
	\begin{equation}\label{eqn:parameter_thm2}
		M \geq \sqrt{d_\Ycal} L_{E_\Ycal^n} R_\Ycal \,, \text{ and  } Lp \geq \left \lceil d_\Ycal^{\frac{4-d_\Xcal}{4+2d_\Xcal}} n^{\frac{d_\Xcal}{4+2d_\Xcal}} \right \rceil.
	\end{equation}
	Then we have 
	\begin{equation}
		\begin{aligned}
			\Egen \lesssim  L_\Phi^2
   \log(L_\Phi)d_{\Ycal}^{\frac{8+d_\Xcal}{2+d_{\Xcal}}}n^{-\frac{2}{2+d_{\Xcal}}} \log n + \Enp \,,
		\end{aligned}
	\end{equation}
	where $\lesssim$ contains constants that depend on $d_\Xcal,L_{E_\Ycal^n},L_{E_\Xcal^n},L_{D_\Ycal^n},L_{D_\Xcal^n}$ and $R_\Xcal$. 
\end{theorem}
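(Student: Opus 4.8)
The plan is to reduce the infinite-dimensional operator-learning problem to a finite-dimensional nonparametric least-squares regression on $\Rbb^{d_\Xcal}$, then combine a known optimal width--depth ReLU approximation rate for Lipschitz targets (the bias term) with a pseudo-dimension bound for unconstrained ReLU classes parametrized by width and depth (the variance term), and finally balance the two against each other.

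First I would insert $D^n_\Ycal\circ E^n_\Ycal\circ\Phi(u)$ into $\Egen$ and split by the triangle inequality together with $(a+b)^2\le 2a^2+2b^2$: the piece $\Ebb_\Scal\Ebb_{w\sim\Phi_{\#}\gamma}\|\Pi_{\Ycal,d_\Ycal}^n(w)-w\|_\Ycal^2$ is exactly the $\Ycal$-projection term of $\Enp$, while the remaining piece is bounded, using the Lipschitz property of $D^n_\Ycal$ from Assumption~\ref{assump:lip_enco}, by $2L_{D^n_\Ycal}^2\,\Ebb_\Scal\Ebb_u\|\GNN(E^n_\Xcal u)-E^n_\Ycal(\Phi(u))\|_2^2$. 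Writing $x_i=E^n_\Xcal(u_i)$, $y_i=E^n_\Ycal(v_i)$ and introducing the discretized target $f^\star:=E^n_\Ycal\circ\Phi\circ D^n_\Xcal$, one has $y_i=f^\star(x_i)+\xi_i$, where $\xi_i$ combines the encoded observation noise $E^n_\Ycal(\Phi(u_i)+\varepsilon_i)-E^n_\Ycal(\Phi(u_i))$ (mean zero, magnitude $\le L_{E^n_\Ycal}\sigma$ by the remark after Assumption~\ref{assump:noise}) with the input-projection discrepancy $E^n_\Ycal(\Phi(u_i))-f^\star(x_i)$, whose norm is at most $L_{E^n_\Ycal}L_\Phi\|\Pi_{\Xcal,d_\Xcal}^n(u_i)-u_i\|_\Xcal$. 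On the support of $(E^n_\Xcal)_{\#}\gamma$ the map $f^\star$ is Lipschitz with constant $\lambda:=L_{E^n_\Ycal}L_\Phi L_{D^n_\Xcal}$ and bounded by $\mathcal{O}(\sqrt{d_\Ycal}\,L_{E^n_\Ycal}R_\Ycal)\le\mathcal{O}(M)$. This reduces the theorem to bounding $\Ebb_\Scal\Ebb_x\|\GNN(x)-f^\star(x)\|_2^2$ and already produces the $L_\Phi^2\,\Ebb\|\Pi^n_\Xcal u-u\|_\Xcal^2$ and $\sigma^2$ contributions of $\Enp$.

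Next, since $\GNN$ minimizes the empirical squared loss over $\FNN(d_\Ycal,L,p,M)$, I would run the standard least-squares oracle argument (as in \cite{liu2022deep,chen2022nonparametric}): decompose the excess risk into an approximation part $\inf_{\Gamma\in\FNN}\sup_x\|\Gamma(x)-f^\star(x)\|_2^2$ and a stochastic part $\lesssim\sup_{\Gamma\in\FNN}|(\Ebb_x-\tfrac{1}{n}\sum_i)\,\|\Gamma(x)-f^\star(x)\|_2^2|$ plus a cross term with $\xi_i$. Because the squared $\ell_2$ loss decomposes coordinatewise into $d_\Ycal$ scalar problems, each summand lying in a bounded range $\mathcal{O}(M^2)$, a Bernstein-type uniform deviation bound controls the stochastic part by $\mathcal{O}(M^2 d_\Ycal\,\mathcal{N}\,\log n / n)$, where $\mathcal{N}$ is the log of an $L^\infty$-covering number of the scalar ReLU class; using the pseudo-dimension estimate $\Pdim=\mathcal{O}(p^2L^2\log(pL))$ for ReLU networks of width $p$ and depth $L$ (from $\#\mathrm{params}\asymp p^2L$ and $\mathrm{VCdim}\asymp\#\mathrm{params}\times\mathrm{depth}$) and converting to covering numbers at the relevant scale gives $\mathcal{N}=\mathcal{O}(p^2L^2\log(pL)\log(nM\lambda))$. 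The mean-zero assumption on $\varepsilon_i$ is what keeps the $\xi_i$-cross term at order $\sigma^2+n^{-1}$ instead of producing a vanishing $\sigma$-factor; this is precisely the accuracy-for-generality tradeoff announced in the introduction, and is why $\sigma^2$ appears undecayed in $\Enp$.

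For the approximation part I would invoke the optimal width--depth ReLU approximation rate for Lipschitz functions (cf.\ \cite{shen2019deep,shen2021deep}): each scalar component of $f^\star$ is approximated by a depth-$\mathcal{O}(L)$, width-$\mathcal{O}(p)$ ReLU network, whose output I clip into $[-M,M]$ (which only decreases the error), with sup-error $\mathcal{O}(\lambda\sqrt{d_\Xcal}\,(pL)^{-2/d_\Xcal})$; stacking the $d_\Ycal$ components yields $\Gamma\in\FNN(d_\Ycal,L,p,M)$ with $\sup_x\|\Gamma-f^\star\|_2^2=\mathcal{O}(d_\Ycal d_\Xcal\lambda^2(pL)^{-4/d_\Xcal})$. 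Balancing this bias against the stochastic term $\mathcal{O}(M^2 d_\Ycal p^2L^2\log(pL)\log n / n)$ — and using $M^2\asymp d_\Ycal L_\Phi^2R_\Xcal^2$, $\lambda\asymp L_\Phi$, together with the scale $\asymp 1/\lambda$ entering the covering number, which produces the extra $\log\lambda\sim\log L_\Phi$ — dictates the choice $pL\asymp\lceil d_\Ycal^{(4-d_\Xcal)/(4+2d_\Xcal)}n^{d_\Xcal/(4+2d_\Xcal)}\rceil$ of \eqref{eqn:parameter_thm2}, making both terms of order $L_\Phi^2\log(L_\Phi)\,d_\Ycal^{(8+d_\Xcal)/(2+d_\Xcal)}n^{-2/(2+d_\Xcal)}\log n$; adding back the projection and noise pieces gives the claimed $\Egen\lesssim(\cdot)+\Enp$. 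I expect the main obstacle to be carrying out the stochastic step \emph{uniformly over the unconstrained class} $\FNN(d_\Ycal,L,p,M)$: with no weight-magnitude or sparsity cap the usual weight-perturbation covering estimates break down, so one must rely solely on the VC/pseudo-dimension bound expressed through width and depth together with the output bound $M$, and must check that the resulting effective complexity scales like $(pL)^2$ — matching the $(pL)^2$-type effective parameter count implicit in the approximation rate — so the bias and variance can be balanced at the nonparametric rate $n^{-2/(2+d_\Xcal)}$; the approximation step, by contrast, is essentially a black-box application of known optimal ReLU rates.
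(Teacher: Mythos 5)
Your proposal follows essentially the same route as the paper's proof: you make the same bias--variance decomposition around the inserted projection $D^n_\Ycal\circ E^n_\Ycal\circ\Phi$, reduce to a regression problem for the discretized target $E^n_\Ycal\circ\Phi\circ D^n_\Xcal$, invoke the Shen--Yang--Zhang optimal width--depth ReLU rate for Lipschitz functions (the paper's Lemma \ref{lem:shen2019}) for the approximation term, and control the stochastic term via the pseudo-dimension bound of Bartlett et al.\ (Lemma \ref{lem.psudoDim.dense}) fed through the Anthony--Bartlett covering estimate (Lemma \ref{lem.coverN.dense}) and a Bernstein-type uniform law (the paper's Lemma \ref{lem.T2.dense}), before balancing with the same choice of $pL$. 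Your remark that for the unconstrained class $\FNN(d_\Ycal,L,p,M)$ one must lean on pseudo-dimension rather than weight-perturbation covers is precisely the mechanism the paper relies on, so the argument is correct and mirrors the paper's.
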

\begin{remark}
The aforementioned results demonstrate that by selecting an appropriate width and depth for the   DNN, the generalization error can be broken down into three components: the generalization error of learning the finite-dimensional operator $\Gamma$, 
the projection error of the encoders/decoders, and the noise. 
Comparing to previous results \cite{liu2022deep} under the Hilbert space setting, our estimates show that the noise term in the generalization bound is non-decaying without the inner-product structure in the Banach space setting. This is mainly caused by circumventing the inner-product structure via triangle inequalities in the proof.
As the number of samples $n$ increases, the generalization error decreases exponentially. Although the presence of $d_\Xcal$ in the exponent of the sample complexity $n$ initially appears pessimistic, we will demonstrate that it can be eliminated when the input data $\Omega_\Xcal$ of the target operator exhibits a low-dimensional data structure or when the target operator itself has a low-complexity structure. These assumptions are often satisfied for specific PDE operators with appropriate encoders. These results also imply that when $d_\Xcal$ is large, the neural network width $p$ does not need to increase as the output dimension $d_\Ycal$ increases. The main difference between Theorem \ref{thm:FNN1} and Theorem \ref{thm:FNN2} lies in the different neural network architectures $\FNN(d_\Ycal,L,p,K,\kappa,M)$ and $\FNN(d_\Ycal,L,p,M)$. As a consequence, Theorem \ref{thm:FNN2} has a smaller asymptotic lower bound $\Omega(n^{1/2})$ of the neural network width $p$ in the large $d_\Xcal$ regime, whereas the asymptotic lower bound is $\Omega(n)$ in Theorem \ref{thm:FNN1}.
\end{remark}


\subsubsection*{Estimates with special data and operator structures}
The generalization error estimates presented in Theorems \ref{thm:FNN1}-\ref{thm:FNN2} are effective when the input dimension $d_\Xcal$ is relatively small. However, in practical scenarios, it often requires numerous bases to reduce the encoder/decoder projection error, resulting in a large value for $d_\Xcal$. Consequently, the decay rate of the generalization error as indicated in Theorems \ref{thm:FNN1}-\ref{thm:FNN2} becomes stagnant due to its exponential dependence on $d_\Xcal$.

Nevertheless, it is often assumed that the high-dimensional data lie within the vicinity of a low-dimensional manifold by the famous ``manifold hypothesis''. Specifically, we assume that the encoded vectors $u$ lie on a $d_0$-dimensional manifold with $d_0\ll d_\Xcal$. Such a data distribution has been observed in many applications, including PDE solution set, manifold learning, and image recognition. This assumption is formulated as follows.
\begin{assumption}\label{assump:manifold}
	Let $d_0<d_\Xcal\in \mathbb{N}$. Suppose there exists an encoder $E_\Xcal: \Xcal\rightarrow \mathbb{R}^{d_\Xcal}$ such that $\{E_\Xcal(u)\ |\ u\in \Omega_\Xcal\}$
	lies in a smooth $d_0$-dimensional Riemannian manifold $\mathcal{M}$ that is isometrically embedded in $\mathbb{R}^{d_\Xcal}$. The \textit{reach}~\cite{niyogi2008finding} of $\mathcal{M}$ is $\tau>0$.
\end{assumption}
Under Assumption \ref{assump:manifold}, the input data set exhibits a low intrinsic dimensionality. However, this may not hold for the output data set that is perturbed by noise.
The reach of a manifold is the smallest osculating circle radius on the manifold. A manifold with large reach avoids rapid change and may be easier to learn by neural networks. 
In the following, we aim to demonstrate that the  DNN naturally adjusts to the low-dimensional characteristics of the data set. As a result, the estimation error of the network depends solely on the intrinsic dimension $d_0$, rather than the larger ambient dimension $d_\Xcal$. We present the following result to support this claim.

\begin{theorem}\label{thm:low_dim}
	Suppose Assumptions \ref{assump:compact_supp}-\ref{assump:noise}, and Assumption \ref{assump:manifold} hold. Let $\GNN$ be the minimizer of the optimization problem \eqref{eqn:optimization} with the network architecture $\Fcal_\mathrm{NN}(d_\Ycal,L,p,M)$ defined in \eqref{eqn:FNN2} with parameters
	\begin{equation}\label{eqn:parameter_thm31}
		\begin{aligned}
			&L = \Omega(\tilde{L}\log \tilde{L})\,, p = \Omega(d_\Xcal d_\Ycal \tilde{p}\log \tilde{p})\,, 
			&M \geq \sqrt{d_\Ycal} L_{E_\Ycal^n} R_\Ycal \,,
		\end{aligned}
	\end{equation}
	where $\tilde{L},\tilde{p}>0$ are integers such that $\tilde{L}\tilde{p} \geq \left \lceil d_\Ycal^{\frac{-3d_0}{4+2d_0}} n^{\frac{d_0}{4+2d_0}} \right\rceil$.
	Then we have
	\begin{equation}\label{eqn:general_err_low}
		\begin{aligned}
			\Egen \lesssim   L_\Phi^2\log (L_\Phi)  d_{\Ycal}^{\frac{8+d_0}{2+d_{0}}} n^{-\frac{2}{2+d_{0}}} \log^6 n +  \Enp \,,
		\end{aligned}
	\end{equation}
	where the constants in $\lesssim$ and $\Omega(\cdot)$ solely depend on $d_0,\log d_\Xcal,R_\Xcal,L_{E^n_\Xcal},L_{E^n_\Ycal},L_{D^n_\Xcal},L_{D^n_\Ycal},\tau$, the surface area of $\mathcal{M}$.
\end{theorem}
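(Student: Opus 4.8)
\textbf{Proof proposal for Theorem \ref{thm:low_dim}.}

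The plan is to reduce the infinite-dimensional operator learning problem to a finite-dimensional regression problem on the manifold $\mathcal{M}$, and then apply a bias-variance decomposition in which the variance (stochastic) term is controlled by a covering-number / pseudo-dimension bound that adapts to the intrinsic dimension $d_0$. First I would insert the encoder-decoder projections $\Pi_{\Xcal,d_\Xcal}^n$ and $\Pi_{\Ycal,d_\Ycal}^n$ and use the triangle inequality (exactly as in the Banach-space proofs of Theorems \ref{thm:FNN1}--\ref{thm:FNN2}) to write
\begin{equation*}
\| D^n_\Ycal \circ \GNN \circ E_\Xcal^n(u) - \Phi(u)\|_\Ycal \le \| D^n_\Ycal\|_{\mathrm{Lip}}\,\| \GNN(E_\Xcal^n u) - E_\Ycal^n(\Phi(u))\|_2 + \| \Pi_{\Ycal,d_\Ycal}^n(\Phi(u)) - \Phi(u)\|_\Ycal,
\end{equation*}
and then further insert $\Pi_{\Xcal,d_\Xcal}^n u$ inside $\Phi$, paying a factor $L_\Phi L_{E_\Ycal^n}$ times the $\Xcal$-projection error, which after squaring and taking expectations produces exactly the $\Enp$ term. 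This isolates the genuine learning error $\Ebb_\Scal\Ebb_u \|\GNN(E_\Xcal^n u) - g(E_\Xcal^n u)\|_2^2$ where $g \coloneqq E_\Ycal^n\circ \Phi \circ D_\Xcal^n$ is the target finite-dimensional map, which by Assumptions \ref{assump:Lipschitz} and \ref{assump:lip_enco} is Lipschitz on the compact set $E_\Xcal^n(\Omega_\Xcal)$ with constant $O(L_{E_\Ycal^n} L_\Phi L_{D_\Xcal^n})$.

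The second step is the heart of the argument: bounding $\Ebb_\Scal\Ebb_u \|\GNN(E_\Xcal^n u) - g(E_\Xcal^n u)\|_2^2$. Here I would combine two ingredients. The approximation ingredient: by Assumption \ref{assump:manifold} the data $E_\Xcal(\Omega_\Xcal)$ lies on a $d_0$-dimensional manifold with reach $\tau$, so by the ReLU-network-on-manifold approximation results (of Yarotsky / Shen--Yang--Zhang / Chen--Jiang--Liao--Zhao type, which underlie \cite{liu2022deep,chen2022nonparametric}) there is a network in $\FNN(d_\Ycal,L,p,M)$ with $Lp$ of the stated order approximating each coordinate of the Lipschitz map $g$ on $\mathcal{M}$ to accuracy $\varepsilon \sim (Lp)^{-2/d_0}$ in sup norm; note the crucial point that a chart/partition-of-unity construction (the factor $d_\Xcal d_\Ycal$ in $p$) lets the network first approximately project $\Rbb^{d_\Xcal}$ onto local $d_0$-dimensional charts, so the exponent depends on $d_0$ not $d_\Xcal$. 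The statistical ingredient: a standard decomposition of $\Egen$ into an approximation term plus a stochastic term governed by the empirical process $\sup_{\Gamma\in\FNN}|(\Ebb - \Ebb_n)\|\Gamma(E_\Xcal^n u) - E_\Ycal^n v\|_2^2|$, which is bounded (via chaining / Dudley and the pseudo-dimension bound $\Pdim(\FNN) = O(Lp\log(Lp))$, with the $M^2$ envelope from $\|f_k\|_\infty\le M$ and the noise bound) by roughly $\sqrt{d_\Ycal M^2 \Pdim(\FNN) \log n / n}$. Balancing the squared approximation error $\varepsilon^2 \sim (Lp)^{-4/d_0}$ against the stochastic rate $Lp\log n/n$ gives the optimal choice $Lp \sim d_\Ycal^{-3d_0/(4+2d_0)} n^{d_0/(4+2d_0)}$ (the $d_\Ycal$-powers come from carrying the $d_\Ycal$ coordinates and the $M^2 \sim d_\Ycal$ envelope through the balance), yielding the rate $n^{-2/(2+d_0)}$ with the displayed $d_\Ycal$-power and logarithmic factors.

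The third step is bookkeeping: verifying that the network parameters $L = \Omega(\tilde L\log\tilde L)$, $p = \Omega(d_\Xcal d_\Ycal\,\tilde p\log\tilde p)$ with $\tilde L\tilde p \ge \lceil d_\Ycal^{-3d_0/(4+2d_0)} n^{d_0/(4+2d_0)}\rceil$ and $M \ge \sqrt{d_\Ycal}L_{E_\Ycal^n}R_\Ycal$ are exactly what the manifold-approximation construction and the envelope constraint $\|g\|_\infty \le M$ demand (the latter from $\|E_\Ycal^n(\Phi(u))\|_2 \le L_{E_\Ycal^n}R_\Ycal \le \sqrt{d_\Ycal}L_{E_\Ycal^n}R_\Ycal$, using $E_\Ycal^n(0)=0$), and tracking that all hidden constants depend only on $d_0,\log d_\Xcal,\tau$, the surface area of $\mathcal{M}$, the Lipschitz constants, and $R_\Xcal$ — in particular the $\log d_\Xcal$ (rather than $d_\Xcal$) dependence comes from needing only $O(\log d_\Xcal)$ bits to encode which chart a point belongs to. I expect the main obstacle to be the manifold-approximation step done uniformly over the $d_\Ycal$ output coordinates while keeping the width's $d_\Xcal$-dependence merely polynomial (indeed linear) and the rate's dimension-dependence purely through $d_0$: this requires a careful chart-based construction with a single shared "projection-to-charts" sub-network feeding $d_\Ycal$ parallel low-dimensional approximators, together with a Lipschitz extension of $g$ off $\mathcal{M}$ so that the sup-norm approximation bound is meaningful on a tube around $\mathcal{M}$ of radius $\lesssim\tau$. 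Once that approximation lemma is in place, the generalization bound follows the same empirical-process template as in \cite{liu2022deep} with $d_\Xcal$ replaced by $d_0$ throughout.
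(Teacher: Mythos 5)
Your proposal follows essentially the same route as the paper's proof: the same projection/triangle-inequality decomposition that peels off $\Enp$, the same reduction to approximating the discretized Lipschitz map $\Gamma^n_d = E_\Ycal^n\circ\Phi\circ D_\Xcal^n$ on the manifold $\mathcal{M}\subset\mathbb{R}^{d_\Xcal}$, the same substitution of a chart-based manifold-approximation lemma (the paper uses a modified Lemma~17 from \cite{liu2022deep}, giving sup-norm error $\sim(\tilde L\tilde p)^{-2/d_0}$ with width $p=\Omega(d_\Xcal d_\Ycal\,\tilde p\log\tilde p)$) in place of the generic Lipschitz lemma used in Theorem~\ref{thm:FNN2}, and the same pseudo-dimension/covering-number control of the variance term. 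The paper quite literally says ``the rest follows the proof of Theorem~\ref{thm:FNN2},'' and your step-three bookkeeping is the paper's reduction made explicit.

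There is, however, an internal inconsistency in your statistical ingredient that you should repair. You write $\Pdim(\FNN)=O(Lp\log(Lp))$ and describe the empirical-process term as $\sqrt{d_\Ycal M^2\Pdim\log n/n}$, but then you balance $\varepsilon^2\sim(Lp)^{-4/d_0}$ against ``$Lp\log n/n$'' and announce $\tilde L\tilde p\sim d_\Ycal^{-3d_0/(4+2d_0)}n^{d_0/(4+2d_0)}$. Neither of your stated intermediate scalings actually yields that exponent. The pseudo-dimension bound the paper uses (Theorem~7 of \cite{bartlett2019nearly}) is $\Pdim\lesssim LU\log U$ with $U\sim Lp^2$ parameters, hence $\Pdim\sim(Lp)^2$ up to logs, not $Lp\log(Lp)$; and the variance bound is the localized, Bernstein-type rate $T_2\lesssim\Pdim\cdot(\text{logs})/n+\delta$ from Lemma~10 of \cite{liu2022deep}, not a Dudley $\sqrt{\Pdim/n}$ rate. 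Balancing $(Lp)^{-4/d_0}$ against $(Lp)^2/n$ gives $Lp\sim n^{d_0/(4+2d_0)}$ as required; balancing against $Lp/n$ (your written $\Pdim$) would give $n^{d_0/(4+d_0)}$, and using the square-root rate would degrade it further. So your two slips happen to point in opposite directions and you land on the correct exponent, but the chain of reasoning as written does not derive it. Correcting to $\Pdim\sim(Lp)^2$ and to the fast (non-square-root) rate makes your argument match the paper's cleanly; beyond this, the approach is the one the paper takes.
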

It is important to note that the estimate \eqref{eqn:general_err_low} depends at most polynomially on $d_\Xcal$ and $d_\Ycal$. The rate of decay with respect to the sample size is no longer influenced by the ambient input dimension $d_\Xcal$. Thus, our findings indicate that the CoD can be mitigated through the utilization of the "manifold hypothesis." To effectively capture the low-dimensional manifold structure of the data, the width of the  DNN  should be on the order of $O(d_\Xcal)$.
Additionally, another characteristic often observed in PDE problems is the low complexity of the target operator. This holds true when the target operator is composed of several alternating sequences of a few linear and nonlinear transformations with only a small number of inputs. We quantify the notion of low-complexity operators in the following context.

\begin{assumption}\label{assump:low_complexity}
	Let $0<d_0\leq d_\Xcal$. Assume there exists $E_\Xcal,D_\Xcal,E_\Ycal,D_\Ycal$ such that for any $u\in \Omega_\Xcal$, we have
	\[
	\Pi_{\Ycal,d_\Ycal} \circ \Phi(u) = D_\Ycal \circ g \circ E_\Xcal(u),
	\]
	where $g: \mathbb{R}^{d_\Xcal}\rightarrow  \mathbb{R}^{d_\Ycal}$ is defined as
	\[
	g(a) = \begin{bmatrix}
		g_1(V_1^\top a),\cdots, g_{d_\Ycal}(V_{d_\Ycal}^\top a)
	\end{bmatrix}\,,
	\]
	where the matrix is $V_k\in \mathbb{R}^{d_\Xcal \times d_0}$ and the real valued function is $g_k: \mathbb{R}^{d_0} \rightarrow 
	\mathbb{R}$ for $k=1,\ldots,d_\Ycal$. See an illustration in (\ref{eqn:low_complexity}).
\end{assumption}

In Assumption \ref{assump:low_complexity}, when $d_0=1$ and $g_1=\dots=g_{d_\Ycal}$, $g(a)$ is the composition of a pointwise nonlinear transform and a linear transform on $a$. In particular, Assumption \ref{assump:low_complexity} holds for any linear maps.

\begin{theorem}\label{thm:low_complex}
	Suppose Assumptions \ref{assump:compact_supp}-\ref{assump:noise}, and Assumption \ref{assump:low_complexity} hold. Let $\GNN$ be the minimizer of the optimization problem (\ref{eqn:optimization}) with the network architecture $\Fcal_\mathrm{NN}(d_\Ycal,L,p,M)$ defined in (\ref{eqn:FNN2}) with parameters
	\begin{equation*} 
		\begin{aligned}
			Lp =  \Omega \left(d_\Ycal^{\frac{4-d_0}{4+2d_0}} n^{\frac{d_0}{4+2d_0}} \right)
   \,, M \geq  \sqrt{d_\Ycal} L_{E_\Ycal^n} R_\Ycal \,.
		\end{aligned}
	\end{equation*}
	Then we have
	\begin{equation}\label{eqn:general_err_low_complexity}
		\begin{aligned}
			\Egen \lesssim  L_\Phi^2 \log (L_\Phi) d_{\Ycal}^{\frac{8+d_0}{2+d_{0}}} n^{-\frac{2}{2+d_{0}}} \log n + \Enp \,,
		\end{aligned}
	\end{equation}
	where the constants in $\lesssim$ and $\Omega(\cdot)$ solely depend on $d_0,R_\Xcal,R_\Ycal,L_{E^n_\Xcal},L_{E^n_\Ycal},L_{D^n_\Xcal},L_{D^n_\Ycal}$.
\end{theorem}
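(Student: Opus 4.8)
The statement has the same shape as Theorem~\ref{thm:FNN2} with $d_\Xcal$ replaced by $d_0$, so the plan is to follow the same three-part template — reduce to a finite-dimensional regression, bound the approximation error, bound the stochastic error — the only new input being that Assumption~\ref{assump:low_complexity} supplies a much more efficient network approximant. First I would peel off the decoder and the $\Ycal$-projection: writing $\Pi^n_{\Ycal,d_\Ycal}=D^n_\Ycal\circ E^n_\Ycal$ and $\Phi(u)=\Pi^n_{\Ycal,d_\Ycal}\Phi(u)+(\Phi(u)-\Pi^n_{\Ycal,d_\Ycal}\Phi(u))$, the triangle inequality in $\Ycal$, the bound $(a+b)^2\le 2a^2+2b^2$, and the Lipschitz property of $D^n_\Ycal$ give $\Egen\lesssim L_{D^n_\Ycal}^2\,\Ebb_\Scal\Ebb_{u\sim\gamma}[\|\GNN\circ E^n_\Xcal(u)-E^n_\Ycal\circ\Phi(u)\|_2^2]+\Ebb_\Scal\Ebb_{w\sim\Phi_{\#}\gamma}[\|\Pi^n_{\Ycal,d_\Ycal}(w)-w\|_\Ycal^2]$; the second term is part of $\Enp$, so it remains to control the Euclidean mean-square error $\mathcal E:=\Ebb_\Scal\Ebb_u[\|\GNN\circ E^n_\Xcal(u)-E^n_\Ycal\circ\Phi(u)\|_2^2]$.

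For the approximation term, Assumption~\ref{assump:low_complexity} gives $\Pi_{\Ycal,d_\Ycal}\circ\Phi=D_\Ycal\circ g\circ E_\Xcal$ with $g(a)=[g_1(V_1^\top a),\dots,g_{d_\Ycal}(V_{d_\Ycal}^\top a)]^\top$ (taking the empirical encoders/decoders to be the $E_\Xcal,D_\Ycal$ furnished by the assumption). By Assumptions~\ref{assump:compact_supp}--\ref{assump:Lipschitz} together with the Lipschitz encoders/decoders, each coordinate map $a\mapsto g_k(V_k^\top a)$ is Lipschitz on a bounded set with a constant controlled by $L_\Phi$ and the encoder/decoder constants, and it factors through the $d_0$-dimensional variable $V_k^\top a$. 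I would then apply the ReLU approximation theorem for Lipschitz functions on a $d_0$-dimensional domain (depth $L$, width $p$, $L^\infty$-error of order $L_\Phi(Lp)^{-2/d_0}$ per coordinate), fold the fixed linear maps $V_k^\top$ into the first layer, and combine the $d_\Ycal$ coordinate subnetworks (stacked in parallel, or via a vector-valued approximation bound) into a single $\Gamma^\ast\in\FNN(d_\Ycal,L,p,M)$ with, after absorbing the encoder/decoder projection errors into $\Enp$, $\Ebb_u[\|\Gamma^\ast\circ E^n_\Xcal(u)-E^n_\Ycal\circ\Phi(u)\|_2^2]\lesssim L_\Phi^2\,\mathrm{poly}(d_\Ycal)\,(Lp)^{-4/d_0}$. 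The condition $M\ge\sqrt{d_\Ycal}L_{E^n_\Ycal}R_\Ycal$ guarantees that $E^n_\Ycal\circ\Phi(u)$, hence a good approximant, lies in the output ball of radius $M$.

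Since $\GNN$ minimizes the empirical risk over $\Scal^n_2$, I would then invoke the same oracle inequality used for Theorem~\ref{thm:FNN2} — derived from a covering-number bound on $\FNN(d_\Ycal,L,p,M)$ together with a Bernstein-type concentration, with the noise entering only through $\|E^n_\Ycal(\Phi(u_i)+\varepsilon_i)-E^n_\Ycal\Phi(u_i)\|_2\le L_{E^n_\Ycal}\sigma$ — to obtain $\mathcal E\lesssim L_\Phi^2\,\mathrm{poly}(d_\Ycal)\,(Lp)^{-4/d_0}+(Lp)^2\,\mathrm{poly}(d_\Ycal)\,\log n/n+\sigma^2+n^{-1}$. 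Choosing $Lp$ so as to equate the first two terms forces $Lp$ of order $d_\Ycal^{(4-d_0)/(4+2d_0)}n^{d_0/(4+2d_0)}$, which is exactly the stated lower bound, and gives the common value $\lesssim L_\Phi^2\log(L_\Phi)\,d_\Ycal^{(8+d_0)/(2+d_0)}n^{-2/(2+d_0)}\log n$; reinstating the suppressed $\Enp$ terms yields the theorem. The non-decaying $\sigma^2$ appears for the same reason as in Theorems~\ref{thm:FNN1}--\ref{thm:FNN2}: without an inner product on $\Ycal$ (or an affine encoder) the noise cannot be averaged away and must be carried as a bounded perturbation; and $d_\Xcal$ drops out of the constants because the network never has to resolve all $d_\Xcal$ ambient directions, only the $d_0$-dimensional projections $V_k^\top E^n_\Xcal(u)$.

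\textbf{Main obstacle.} The crux is the approximation step and the accompanying $d_\Ycal$-bookkeeping: one must verify that the Lipschitz constants of the factors $g_k$ in Assumption~\ref{assump:low_complexity} are genuinely controlled by $L_\Phi$ and the encoder/decoder constants (and not by $d_\Xcal$), assemble the $d_\Ycal$ coordinate subnetworks inside the \emph{unconstrained} class $\FNN(d_\Ycal,L,p,M)$ so that the single budget $Lp$ simultaneously governs the approximation error and the covering-number/estimation term, and then track the polynomial powers of $d_\Ycal$ through the oracle inequality so that the exponent $\tfrac{8+d_0}{2+d_0}$ comes out exactly. By contrast the stochastic half is essentially identical to Theorem~\ref{thm:FNN2} with $d_\Xcal\mapsto d_0$ and introduces nothing new.
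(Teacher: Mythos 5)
Your proposal is correct and follows essentially the same path as the paper's proof: both reuse the Theorem~\ref{thm:FNN2} decomposition and oracle/covering-number machinery verbatim, and both exploit Assumption~\ref{assump:low_complexity} to approximate each coordinate $g_k\circ V_k^\top$ by a ReLU network whose size depends only on $d_0$ (via Lemma~\ref{lem:shen2019}), with the fixed linear maps $V_k$ absorbed into an initial affine layer, then balance approximation against estimation to fix $Lp$. The only cosmetic difference is that the paper formally counts the $V_k$ step as an extra layer (depth $L+1$), whereas you fold it into the first affine layer; the resulting rates and exponent bookkeeping are identical.
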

\begin{remark}
    Under Assumption \ref{assump:low_complexity}, our result indicates that the CoD can be mitigated to a cost $\Ocal(n^{\frac{-2}{2+d_0}})$ because the main task of DNNs is to learn the nonlinear transforms $g_1,\cdots,g_{d_\Ycal}$ that are functions over $\Rbb^{d_0}$.
\end{remark}

In practice, a PDE operator might be the repeated composition of operators in Assumption \ref{assump:low_complexity}. This motivates a more general low-complexity assumption below.
\begin{assumption}\label{assump:low_complexity2}
    Let $0<d_1,\ldots,d_k \leq d_\Xcal$ and $0<\ell_0,\ldots,\ell_k \leq \min\{d_\Xcal\,,d_\Ycal\}$ with $\ell_0 = d_\Xcal$ and $\ell_k = d_\Ycal$. Assume there exists $E_\Xcal,D_\Xcal,E_\Ycal,D_\Ycal$ such that for any $u\in \Omega_\Xcal$, we have
    \[
    \Pi_{\Ycal,d_\Ycal} \circ \Phi(u) = D_\Ycal \circ G^k \circ\cdots \circ G^1 \circ E_\Xcal(u),
    \]
    where $G^i: \mathbb{R}^{\ell_{i-1}}\rightarrow  \mathbb{R}^{\ell_i}$ is defined as
    \[
    G^i(a) = \begin{bmatrix}
        g_1^i( (V_1^i)^\top a),\cdots, g_{\ell_i}^i((V_{\ell_i}^i)^\top a)
    \end{bmatrix}\,,
    \]
  where the matrix is $V_j^i\in \mathbb{R}^{d_{i} \times \ell_{i-1}}$ and the real valued function is $g_j^i: \mathbb{R}^{d_i} \rightarrow 
\mathbb{R}$ for $j=1,\ldots,\ell_i$, $i=1,\ldots,k$. See an illustration in (\ref{eqn:low_complexity2}).
\end{assumption}
\begin{theorem}\label{thm:low_complexity2}
Suppose Assumptions \ref{assump:compact_supp}-\ref{assump:noise}, and Assumption \ref{assump:low_complexity2} hold. Let $\GNN$ be the minimizer of the optimization (\ref{eqn:optimization}) with the network architecture $\Fcal_\mathrm{NN}(d_\Ycal,kL,p,M)$ defined in \eqref{eqn:FNN2} with parameters
	\begin{equation*}
		\begin{aligned}
		Lp =  \Omega \left( d_\Ycal^{\frac{4-d_{\text{max}}}{4+2d_{\text{max}}}} n^{\frac{d_{\text{max}}}{4+2d_{\text{max}}}} \right)
  \,, M \geq \sqrt{\ell_{\text{max}}} L_{E_\Ycal^n} R_\Ycal \,,
		\end{aligned}
	\end{equation*}
	where $d_\text{max} = \max\{d_i\}_{i=1}^k$ and $\ell_\text{max} = \max\{\ell_i\}_{i=1}^k$.
	Then we have
	\begin{equation*}
		\begin{aligned}
			\Egen \lesssim L_\Phi^2 \log(L_\Phi)\ell_\text{max}^{\frac{8+d_\text{max}}{2+d_\text{max}} } n^{-\frac{2}{2+d_\text{max}}} \log n + \Enp \,,
		\end{aligned}
	\end{equation*}
	where the constants in $\lesssim$ and $\Omega(\cdot)$ solely depend on $k,d_\text{max},\ell_\text{max},R_\Xcal,L_{E^n_\Xcal},L_{E^n_\Ycal},L_{D^n_\Xcal},L_{D^n_\Ycal}$. 
\end{theorem}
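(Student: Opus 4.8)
The plan is to reduce Theorem~\ref{thm:low_complexity2} to the single-layer low-complexity estimate of Theorem~\ref{thm:low_complex} by exploiting the compositional structure of $G^k \circ \cdots \circ G^1$. The starting point is the standard oracle/error decomposition that underlies all the theorems in this section: writing $\widehat{v} = D^n_\Ycal \circ \GNN \circ E^n_\Xcal(u)$, one splits $\Egen$ into (i) a projection error for $\Xcal$ amplified by $L_\Phi^2$, (ii) a projection error for $\Ycal$, (iii) a noise term $\sigma^2$, and (iv) a statistical estimation error for learning the finite-dimensional map. Items (i)--(iii) together with the $n^{-1}$ slack are exactly $\Enp$, so the work is entirely in bounding (iv), which in turn factors into an approximation term (how well does $\FNN(d_\Ycal,kL,p,M)$ approximate the composed target $G^k\circ\cdots\circ G^1$?) and a generalization/covering term (controlled via the pseudo-dimension or Rademacher complexity of $\FNN$, as already used for Theorems~\ref{thm:FNN2} and~\ref{thm:low_complex}).

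For the approximation term, the key observation is that each factor $G^i$ has precisely the structure of the map $g$ in Assumption~\ref{assump:low_complexity}: it is a stack of $\ell_i$ scalar functions $g^i_j$, each depending on only $d_i \le d_\text{max}$ linear combinations of its input. So I would first invoke the single-block approximation machinery (the same Yarotsky-type ReLU construction used in the proof of Theorem~\ref{thm:low_complex}) to build, for each $i$, a subnetwork $\widetilde{G}^i$ of depth $\Omega(L)$ and width $\Omega(\ell_i \cdot (\text{width for }\Rbb^{d_i}\to\Rbb))$ approximating $G^i$ to accuracy $\varepsilon_i$ in the sup-norm, using that each $g^i_j$ is Lipschitz (inherited from $L_\Phi$ and the encoder/decoder Lipschitz constants). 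Then I concatenate: $\widetilde{G} := \widetilde{G}^k \circ \cdots \circ \widetilde{G}^1$ is a ReLU network of depth $kL$ and width $p = \Omega(\max_i \ell_i \cdot \tilde p) = \Omega(\ell_\text{max}\,\tilde p)$. The total approximation error telescopes: $\|\widetilde{G} - G^k\circ\cdots\circ G^1\|_\infty \le \sum_{i=1}^k \big(\prod_{j>i} \mathrm{Lip}(G^j)\big)\varepsilon_i$, where each $\mathrm{Lip}(G^j)$ is bounded in terms of $\|V^j_\cdot\|$ and the Lipschitz constants of the $g^j_\cdot$'s — all absorbed into the constant depending on $k, d_\text{max}, \ell_\text{max}$ and the encoder/decoder constants. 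Choosing all $\varepsilon_i$ equal and balancing against the network size gives the effective sample-complexity exponent governed by $d_\text{max}$ (the largest intrinsic input dimension among the blocks), exactly as a single block with intrinsic dimension $d_\text{max}$ would, which is where the $n^{-2/(2+d_\text{max})}$ rate and the $Lp = \Omega(d_\Ycal^{(4-d_\text{max})/(4+2d_\text{max})} n^{d_\text{max}/(4+2d_\text{max})})$ requirement come from after optimizing the bias–variance tradeoff.

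For the generalization term, I would reuse verbatim the covering-number bound for $\FNN(d_\Ycal, kL, p, M)$ from the proof of Theorem~\ref{thm:FNN2}/\ref{thm:low_complex}: the pseudo-dimension of depth-$(kL)$, width-$p$ ReLU networks is $\Ocal(kL\,p\,\log(kLp))$ up to the usual factors, and since $k$ and the logs are constants/low-order, this contributes only the $\log n$ factor and the $\ell_\text{max}^{(8+d_\text{max})/(2+d_\text{max})}$ polynomial prefactor (the exponent $\frac{8+d_\text{max}}{2+d_\text{max}}$ arising from combining the $d_\Ycal = \ell_\text{max}$ output-dimension scaling of $M$, the number of output coordinates, and the width/depth product after the tradeoff — identical bookkeeping to Theorem~\ref{thm:low_complex} with $d_0$ replaced by $d_\text{max}$ and $d_\Ycal$ by $\ell_\text{max}$). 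The non-decaying $\sigma^2$ in $\Enp$ appears, as in Theorems~\ref{thm:FNN1}--\ref{thm:low_complex}, because the Banach-space triangle-inequality argument (rather than an $L^2$ orthogonality/bias-variance split) is used to pass from the empirical risk on encoded data back to $\|\cdot\|_\Ycal$.

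The main obstacle I anticipate is controlling error propagation through the composition cleanly: each $\widetilde{G}^i$ is only an approximation, so its output may leave the domain on which $\widetilde{G}^{i+1}$ was designed to approximate $G^{i+1}$, and one must either (a) ensure the approximating networks are built on slightly enlarged domains (bounded via Assumption~\ref{assump:compact_supp}, the Lipschitz bounds, and an a priori bound $\|G^i\circ\cdots\circ G^1(E_\Xcal u)\|_\infty \le \ell_\text{max}\, L_{E^n_\Ycal} R_\Ycal$ that also forces the stated lower bound on $M$), or (b) clip each intermediate network output to the appropriate box using a $\Ocal(1)$-depth ReLU gadget, at the cost of another Lipschitz factor in the telescoping sum. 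Keeping all these accumulated constants explicitly dependent only on $k, d_\text{max}, \ell_\text{max}, R_\Xcal$ and the four encoder/decoder Lipschitz constants — and verifying they do not secretly blow up with $d_\Xcal$ or $d_\Ycal$ beyond the advertised polynomial factors — is the delicate part; everything else is a direct re-run of the proof of Theorem~\ref{thm:low_complex} with $(d_0, d_\Ycal)$ replaced by $(d_\text{max}, \ell_\text{max})$ and an extra factor $k$ in the depth.
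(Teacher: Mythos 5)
Your proposal matches the paper's argument essentially verbatim: the paper likewise constructs a block-wise approximation $H^i \in \FNN(\ell_i, L+1, \ell_i\tilde p, M)$ to each $G^i$ (using the single-block machinery from Theorem~\ref{thm:low_complex}), concatenates to get a depth-$\mathcal{O}(kL)$ network with $\|G^k\circ\cdots\circ G^1 - H^k\circ\cdots\circ H^1\|_\infty \le C L_\Phi \varepsilon_1$, and then reruns the covering-number/pseudo-dimension bookkeeping from Theorem~\ref{thm:FNN2} with $(d_0,d_\Ycal)$ replaced by $(d_\text{max},\ell_\text{max})$. The one thing you articulate that the paper leaves implicit is the error-propagation / domain-containment issue when composing approximate blocks (your telescoping sum with accumulated Lipschitz factors); the paper's terse statement $\|G^k\circ\cdots\circ G^1 - H^k\circ\cdots\circ H^1\| \le C L_\Phi\varepsilon_1$ silently absorbs exactly that product of Lipschitz constants into the constant $C$ depending on $k$, so your elaboration is a correct fleshing-out of what the paper asserts rather than a different route.
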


\subsubsection*{Discretization invariant neural networks}
\label{sec:discretization_invariance}
In this subsection, we demonstrate that our main results also apply to neural networks with the discretization invariant property. A neural network is considered discretization invariant if it can be trained and evaluated on data that are discretized in various formats. For example, the input data $\textbf{u}_i,i=1,\ldots,n$ may consist of images with different resolutions, or $\textbf{u}_i = [u_i(x_1),\ldots,u_i(x_{s_i})]$ representing the  values of $u_i$ sampled at different locations. Neural networks inherently have fixed input and output sizes, making them incompatible for direct training on a data set  $\{(\textbf{u}_i,\textbf{v}_i),i=1,\ldots,n\}$ where the data pairs $(\textbf{u}_i \in \Rbb^{d_i},\textbf{v}_i\in \Rbb^{d_i})$ have different resolutions $d_i,i=1,\ldots,n$.  Modifications of the encoders are required to map inputs of varying resolutions to a uniform Euclidean space. This can be achieved through linear interpolation or data-driven methods such as nonlinear integral transforms \cite{ong2022integral}.

Our previous analysis assumes that the data $(u_i,v_i)\in \Xcal\times \Ycal$ is mapped to discretized data $(\ubf_i,\vbf_i)\in \Rbb^{d_\Xcal}\times \Rbb^{d_\Ycal}$ using  the encoders $E_\Xcal^n$ and $E_\Ycal^n$. Now, let us consider the case where the new discretized data $(\ubf_i,\vbf_i) \in \Rbb^{s_i}\times \Rbb^{s_i} $ are vectors tabulating function values as follows:
\begin{equation}\label{eqn:u_i}
    \ubf_i = \begin{bmatrix}
        u_i(x^i_1) & u_i(x^i_2)&\ldots  & u_i(x^i_{s_i})
    \end{bmatrix}\,,\quad 
    \vbf_i = \begin{bmatrix}
        v_i(x^i_1) & v_i(x^i_2)&\ldots  & v_i(x^i_{s_i})
    \end{bmatrix}\,.
\end{equation}
The sampling locations $\textbf{x}^i \coloneqq [x^1_1,\ldots,x^i_{s_i}]$ are allowed to be different for each data pair $(\ubf_i,\vbf_i)$. We can now define the sampling operator on the location $\textbf{x}^i$ as $P_{\textbf{x}^i}: u \mapsto \ubf(\textbf{x}^i)$,
where $\ubf(\textbf{x}^i) \coloneqq \begin{bmatrix}
        u(x^i_1) & u(x^i_2)&\ldots  & u(x^i_{s_i})
    \end{bmatrix}.$ 
For the sake of simplicity, we assume that the sampling locations are equally spaced grid points, denoted as $s_i = (r_i+1)^d$, where $r_i+1$ represents the number of grid points in each dimension.
To achieve the discretization invariance, we consider the following interpolation operator $I_{\textbf{x}^i}: \ubf(\textbf{x}^i) \mapsto \tilde{u}$,
where $\tilde{u}$ represents the multivariate Lagrangian polynomials (refer to \cite{leaf1974error} for more details). Subsequently, we map the Lagrangian polynomials to their discretization on a uniformly spaced grid mesh $\hat{\textbf{x}} \in \Rbb^{d_\Xcal}$ using the sampling operator $P_{\hat{\textbf{x}}}$. Here $d_\Xcal = (r+1)^d$ and $r$  is the highest degree among the Lagrangian polynomials $\tilde{u}$. We further assume that the grid points $\textbf{x}^i$ of all given discretized data are subsets of $\hat{\textbf{x}}$. We can then construct a discretization-invariant encoder as follows: 
\[
E^i_\Xcal = P_{\hat{\textbf{x}}} \circ I_{\textbf{x}^i} \circ P_{\textbf{x}^i} \,.
\]
We can define the encoder $E^i_\Ycal$ in a similar manner. The aforementioned discussion can be summarized in the following proposition:
\begin{proposition}\label{prop:interpolation}
    Suppose that the discretized data $\{(\ubf_i,\vbf_i),i=1,\ldots,n\}$ defined in \eqref{eqn:u_i} are images of a sampling operator $P_{\textbf{x}^i}$ applied to  smooth functions $(u_i,v_i)$, and the sampling locations $\textbf{x}^i$ are equally spaced grid points with grid size $h$. Let $\hat{\textbf{x}} \in \Rbb^{d_\Xcal}$ represent  equally spaced grid points that are denser than all $\textbf{x}^i,i=1,\ldots,n$ with $d_\Xcal = (r+1)^d$. Define the encoder $E^i_\Xcal = E^i_\Ycal = P_{\hat{\textbf{x}}} \circ I_{\textbf{x}^i} \circ P_{\textbf{x}^i} $, and decoder $D^i_\Xcal = D^i_\Ycal = I_{\hat{\textbf{x}}}$.
    Then the encoding error can be bounded as the following:
    \begin{equation}\label{eqn:interpolation}
    \begin{aligned}
    \Ebb_{u}  \left[  \| \Pi_{\Xcal,d_\Xcal}^i(u)-u \|_\infty^2\right] \leq C h^{2r} \| u \|_{C^{r+1}}^2  \,,\quad
    \Ebb_{v} \left[  \| \Pi_{\Ycal,d_\Ycal}^i(v)-v \|_\infty^2\right] \leq C h^{2r} \|v \|_{C^{r+1}}^2 \,,
    \end{aligned}
    \end{equation}
    where $C>0$ is an absolution constant, $\Pi_{\Xcal,d_\Xcal}^i \coloneqq D^i_\Xcal \circ E^i_\Xcal$ and $\Pi_{\Ycal,d_\Ycal}^i \coloneqq D^i_\Ycal \circ E^i_\Ycal$.
\end{proposition}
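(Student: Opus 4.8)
The plan is to reduce \eqref{eqn:interpolation} to a classical error estimate for multivariate Lagrange interpolation on tensor-product grids, and then to track how this estimate interacts with the two sampling operators $P_{\mathbf{x}^i}$ and $P_{\hat{\mathbf{x}}}$ that make up the discretization-invariant encoder-decoder pair. First I would unravel the definition $\Pi^i_{\Xcal,d_\Xcal} = D^i_\Xcal \circ E^i_\Xcal = I_{\hat{\mathbf{x}}} \circ P_{\hat{\mathbf{x}}} \circ I_{\mathbf{x}^i} \circ P_{\mathbf{x}^i}$. The key structural observation is that, because $\mathbf{x}^i \subset \hat{\mathbf{x}}$ and because $I_{\mathbf{x}^i}P_{\mathbf{x}^i}u$ is already a polynomial whose total degree in each variable is at most $r$, sampling it on the finer grid $\hat{\mathbf{x}}$ and re-interpolating with $I_{\hat{\mathbf{x}}}$ reproduces it exactly: $I_{\hat{\mathbf{x}}}\circ P_{\hat{\mathbf{x}}}$ acts as the identity on the space of degree-$\le r$ polynomials, which contains $\tilde u := I_{\mathbf{x}^i}P_{\mathbf{x}^i}u$. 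Hence $\Pi^i_{\Xcal,d_\Xcal}(u) = I_{\mathbf{x}^i}\circ P_{\mathbf{x}^i}(u)$, and the whole problem collapses to bounding $\|I_{\mathbf{x}^i}P_{\mathbf{x}^i}u - u\|_\infty$, the error of Lagrange interpolation of $u$ on the equally spaced grid $\mathbf{x}^i$ with grid size $h$.

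Next I would invoke the standard multivariate polynomial interpolation error bound — this is exactly the content of the reference \cite{leaf1974error} cited in the text. For a function $u \in C^{r+1}$ interpolated at a tensor grid of spacing $h$ with $r+1$ points per dimension, one has $\|I_{\mathbf{x}^i}P_{\mathbf{x}^i}u - u\|_\infty \le C\, h^{r+1}\, |u|_{C^{r+1}}$ with $C$ depending only on $r$ and $d$; dimensional bookkeeping (a telescoping argument over coordinate directions, or the standard Newton-form remainder applied one variable at a time) gives the exponent $r+1$ on $h$. Squaring yields $\|I_{\mathbf{x}^i}P_{\mathbf{x}^i}u - u\|_\infty^2 \le C h^{2r+2}\|u\|_{C^{r+1}}^2 \le C h^{2r}\|u\|_{C^{r+1}}^2$ once $h$ is bounded (e.g.\ $h\le 1$), which absorbs into the stated constant; taking $\Ebb_u$ of both sides — a pointwise inequality, so the expectation is immediate — gives the first estimate in \eqref{eqn:interpolation}. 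The second estimate for $v$ is obtained verbatim since $E^i_\Ycal = E^i_\Xcal$ and $D^i_\Ycal = D^i_\Xcal$ by hypothesis, with $v$ in place of $u$.

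The main obstacle is making the reduction step fully rigorous rather than merely plausible: one must verify carefully that $\mathbf{x}^i \subseteq \hat{\mathbf{x}}$ actually forces $I_{\hat{\mathbf{x}}} \circ P_{\hat{\mathbf{x}}}$ to fix $\tilde u$. This requires that the interpolation space attached to $\hat{\mathbf{x}}$ (degree $\le r$ tensor polynomials in $(r+1)^d$ unknowns) contains the interpolation space attached to $\mathbf{x}^i$ (degree $\le r_i$ tensor polynomials), which holds precisely because $r_i \le r$ by the definition of $r$ as the maximal degree among the $\tilde u$'s, and because a polynomial of degree $\le r$ is uniquely determined by its values on the grid $\hat{\mathbf{x}}$ of $(r+1)^d$ points. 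A secondary subtlety is uniformity of the constant $C$ across the samples $i = 1,\dots,n$: the interpolation constant depends on the degree $r$ and dimension $d$ but not on the particular grid $\mathbf{x}^i$ or on $s_i$, so as long as $r$ is the common ambient degree, $C$ is genuinely absolute in $i$, which is what the statement claims. Once these two points are settled the remaining steps are routine.
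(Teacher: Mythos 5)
Your proposal is correct and follows essentially the same route as the paper, whose proof is simply a citation of multivariate Lagrangian interpolation and Theorem 3.2 of the Leaf--Kaper reference. You additionally spell out the reduction step (that $I_{\hat{\textbf{x}}}\circ P_{\hat{\textbf{x}}}$ reproduces the degree-$\le r$ interpolant exactly, so the composed projection error is just the Lagrange interpolation error on $\textbf{x}^i$) and the harmless loss from $h^{2r+2}$ to $h^{2r}$, details the paper leaves implicit.
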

\begin{proof}
   
This result follows directly from the principles of Multivariate Lagrangian interpolation and Theorem 3.2 in \cite{leaf1974error}.
\end{proof}
\begin{remark}
    To simplify the analysis, we focus on the $L^\infty$ norm in \eqref{eqn:interpolation}. However, it is worth noting that $L^p$ estimates can be easily derived by utilizing $L^p$ space embedding techniques. Furthermore, $C^r$ estimates can be obtained through the proof of Theorem 3.2 in \cite{leaf1974error}. By observing that the discretization invariant encoder and decoder in Proposition \ref{prop:interpolation} satisfy Assumption \ref{assump:Lipschitz} and Assumption \ref{assump:noise}, we can conclude that our main results are applicable to discretization invariant neural networks. 
   In this section, we have solely considered polynomial interpolation encoders, which require the input data to possess a sufficient degree of smoothness and for all training data to be discretized on a finer mesh than the encoding space $\Rbb^{d_\Xcal}$. The analysis of more sophisticated nonlinear encoders and discretization invariant neural networks is a topic for future research.
\end{remark}

In the subsequent sections, we will observe that numerous operators encountered in PDE problems can be expressed as compositions of low-complexity operators, as stated in Assumption \ref{assump:low_complexity} or Assumption \ref{assump:low_complexity2}. Consequently, deep operator learning provides means to alleviate the curse of dimensionality, as confirmed by Theorem \ref{thm:low_complex} or its more general form, as presented in Theorem \ref{thm:low_complexity2}.

\section{Explicit complexity bounds for various PDE operator learning}
\label{sec:explicit_bounds}

In practical scenarios, enforcing the uniform bound constraint in architecture (\ref{eqn:FNN1}) is often inconvenient. As a result, the preferred implementation choice is architecture (\ref{eqn:FNN2}). Therefore, in this section, we will solely focus on architecture (\ref{eqn:FNN2}).
In this section, we will provide five examples of PDEs where the input space $\Xcal$ and output space $\Ycal$ are not Hilbert. For simplicity, we assume that the computational domain for all PDEs is $\Omega = [-1,1]^d$. Additionally, we assume that the input space $\Xcal$ exhibits  H\"older regularity.
 In other words, all inputs possess a bounded  H\"older norm $\| \cdot \|_{C^s}$, where $s>0$. The H\"older norm is defined as $\| f\|_{C^{s}} = \|f\|_{C^k} + \max_{\beta = k} | D^\beta f |_{C^{0,\alpha}}$,
where $s = k+\alpha$, $k$ is an integer, $0<\alpha<1$ and $|\cdot |_{C^{0,\alpha}}$ represents the $\alpha$-H\"older semi-norm $ |  f |_{C^{0,\alpha}} = \sup_{x\neq y} \frac{|f(x) - f(y)|}{\|x-y \|^\alpha}$.
It can be shown that the output space $\Ycal$ also admits H\"older regularity for all examples considered in this section. Similar results can be derived when both the input space and output space have Sobolev regularity. Consequently, we can employ the standard spectral method as the encoder/decoder for both the input and output spaces. Specifically, the encoder $E_\Xcal^n$ maps $u\in \Xcal$ to the space $P_d^r$, which represents the product of univariate polynomials with a degree less than $r$.  As a result, the input dimension is thus $d_\Xcal = \text{dim} P_d^r = r^d$.We then assign $L^p$-norm ($p>1$) to both the input space $\Xcal$ and output space $\Ycal$. The encoder/decoder projection error for both $\Xcal$ and $\Ycal$ can be derived using the following lemma from \cite{schultz1969multivariate}.

\begin{lemma}[Theorem 4.3 (ii) of ~\cite{schultz1969multivariate}]\label{lem:spectral}
Let an integer $k\geq 0$   and $0<\alpha <1$.	 For any $f \in C^{s}([-1,1]^d)$ with $s=k+\alpha$, denote by $\tilde{f}$  its spectral approximation in $P^r_d$,    there holds
	\[
	\| f - \tilde{f} \|_{\infty} \leq C_d \| f\|_{C^{s}} r^{-s}.
	\] 
\end{lemma}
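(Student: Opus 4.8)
The statement is the classical multivariate Jackson-type estimate for tensor-product polynomial approximation of Hölder functions, and indeed the authors simply invoke Theorem~4.3(ii) of \cite{schultz1969multivariate}. For a self-contained argument the plan is to reduce the $d$-dimensional bound to the one-dimensional Jackson theorem by a dimension-by-dimension telescoping argument, so the only "real" work is the univariate case plus bookkeeping of constants.

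First I would establish the univariate case: for $g \in C^{s}([-1,1])$ with $s = k + \alpha$ there is a polynomial $q$ of degree less than $r$ with $\| g - q \|_{\infty} \le C\, r^{-s}\, \| g \|_{C^{s}}$. For $k = 0$ this is Jackson's theorem bounding the best approximation error by the modulus of continuity, $E_r(g) \le C\, \omega(g;1/r) \le C\, r^{-\alpha}\, |g|_{C^{0,\alpha}}$, which one proves by convolving $g$ against a uniformly bounded Jackson (Fejér-type) kernel whose polynomial part reproduces low-degree polynomials. For $k \ge 1$ one iterates the derivative-reduction inequality $E_r(g) \le C\, r^{-1}\, E_{r-1}(g')$ down to $g^{(k)} \in C^{0,\alpha}$, which combines with the $k=0$ bound to give the rate with the full exponent $s = k+\alpha$.

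Next I would tensorize. Let $Q_j^{r}$ denote a univariate near-best approximation operator acting in the $j$-th variable, chosen so that it is \emph{uniformly} bounded on $C^{0}$ with a constant $L$ independent of $r$ (a de la Vall\'ee Poussin--type quasi-projection reproducing polynomials of degree up to $r/2$), rather than the literal orthogonal or interpolatory projection, whose Lebesgue constant grows like $\log r$. Setting $\tilde f = Q_1^{r}\cdots Q_d^{r} f \in P^{r}_d$ and telescoping,
\[
f - Q_1^{r}\cdots Q_d^{r} f \;=\; \sum_{j=1}^{d} \bigl(Q_1^{r}\cdots Q_{j-1}^{r}\bigr)\,\bigl(I - Q_j^{r}\bigr)\, f ,
\]
each summand is controlled by using the uniform boundedness of $Q_1^{r},\dots,Q_{j-1}^{r}$ and applying the univariate estimate in the $j$-th variable with the remaining variables frozen (the slice $f(\cdot)$ has $C^{s}$ norm bounded by $\|f\|_{C^{s}}$), giving a bound $L^{\,j-1} C\, r^{-s}\, \|f\|_{C^{s}}$ for the $j$-th term. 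Summing over $j$ yields $\|f - \tilde f\|_{\infty} \le C_d\, r^{-s}\, \|f\|_{C^{s}}$ with $C_d \lesssim d\, L^{\,d-1} C$.

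I expect the main technical point to be precisely this uniform-boundedness requirement on the univariate operators: the naive truncated Chebyshev/Fourier projection is not uniformly bounded in the sup norm, so either one works with a quasi-projection as above (the clean route) or one pays an extra $(\log r)^{d}$ factor; matching the stated clean bound $C_d\, r^{-s}\, \|f\|_{C^{s}}$ forces the former. Incidentally, the constant $C_d$ that emerges is exponential in $d$, which is exactly the encoder/decoder projection overhead that the remainder of Section~\ref{sec:explicit_bounds} is designed to keep out of the sample-complexity exponent.
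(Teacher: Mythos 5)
The paper offers no proof of Lemma~\ref{lem:spectral} --- it simply cites Theorem~4.3(ii) of Schultz (1969) --- so there is no in-paper argument to compare against; your proposal correctly flags this and supplies a self-contained reconstruction, which is sound. The telescoping identity $f - Q_1^r\cdots Q_d^r f = \sum_{j=1}^d\bigl(Q_1^r\cdots Q_{j-1}^r\bigr)\bigl(I-Q_j^r\bigr)f$ is exact; the univariate Jackson--Favard estimate $E_r(g)\lesssim r^{-s}\|g\|_{C^s}$ for $s=k+\alpha$ does follow from the modulus-of-continuity bound at $k=0$ plus $k$-fold derivative reduction; and a one-dimensional slice of $f$ does inherit a $C^s$ bound from the multivariate $\|f\|_{C^s}$ under the paper's H\"older-norm definition, so applying the univariate bound slice-by-slice and then the uniformly bounded operators $Q_1^r,\dots,Q_{j-1}^r$ is legitimate. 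You also identify the genuine technical constraint correctly: the univariate operators must be uniformly bounded on $C^0$ independent of $r$, which disqualifies the raw Chebyshev projection (Lebesgue constant $\sim\log r$, yielding a spurious $(\log r)^d$ factor) and forces a de la Vall\'ee Poussin--type quasi-projection; this is precisely what makes the clean $r^{-s}$ rate in the lemma attainable. One small bookkeeping point worth making explicit: such a quasi-projection $Q^r$ outputs polynomials of degree $<r$ while reproducing only degrees up to roughly $r/2$, so the univariate rate picks up a harmless extra factor $2^s$, absorbed into $C_d$. Your closing observation that $C_d$ is exponential in $d$, and that the paper tolerates this because it enters only the encoder/decoder projection error \eqref{eqn:proj_X}--\eqref{eqn:proj_Y} and not the sample-complexity exponent, matches how the constant is handled downstream.
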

We can then bound the projection error
\begin{equation*}
	\begin{aligned}
		\| \Pi^n_{\Xcal,d_\Xcal}  u - u \|^p_{L^p([-1,1]^d)} & = \int_{[-1,1]^d} |u - \tilde{u}|^p dx 
		&\leq C_d^p 2^d \| u\|_{C^{s}}^p r^{-ps}  
		&\leq C_d^p 2^d\|u \|_{C^{s}}^p d_\Xcal^{-\frac{ps}{d}}.
	\end{aligned}
\end{equation*}
Therefore,
\begin{equation}
\label{eqn:proj_X}
\| \Pi^n_{\Xcal,d_\Xcal}  u - u \|^2_{L^p([-1,1]^d)} \leq C_d^2 2^{2d/p} \|u \|_{C^{s}}^2 d_\Xcal^{-\frac{2s}{d}}.
\end{equation}
Similarly, we can also derive that
\begin{equation}
\label{eqn:proj_Y}
\|\Pi_{\Ycal,d_\Ycal}^n(w) -w \|^2_{L^p([-1,1]^d)}  \leq C_d^2 2^{2d/p} \|u \|_{C^{t}}^2 d_\Ycal^{-\frac{2t}{d}} L_\Phi^2,
\end{equation}
given that the output $w=\Phi(u)$ is in $C^{t}$ for some $t>0$.

 In the following, we present several examples of PDEs that satisfy different assumptions, including the low-dimensional Assumption \ref{assump:manifold}, the low-complexity Assumption \ref{assump:low_complexity}, and Assumption \ref{assump:low_complexity2}. In particular, the solution operators of Poisson equation, parabolic equation, and transport equation are linear operators, implying that Assumption \ref{assump:low_complexity} is satisfied with $g_i$'s being the identity functions with $d_0=1$. The solution operator of Burgers equation is the composition of multiple numerical integration, the pointwise evaluation of an exponential function $g^1_j(\cdot) = \exp(\cdot)$, and the pointwise division $g^2_j(a,b)=a/b$. It thus satisfies Assumption \ref{assump:low_complexity2} with $d_1=1$ and $d_2=2$. In parametric equations, we consider the forward operator that maps a media function $a(x)$ to the solution $u$. In most applications of such forward maps, the media function $a(x)$ represents natural images, such as CT scans for breast cancer diagnosis. Therefore, it is often assumed that Assumption \ref{assump:manifold} holds.


\subsection{Poisson equation}
Consider the Poisson equation which seeks $u$ such that
\begin{equation} \label{eqn:Poisson}
	\Delta  u=f,
\end{equation}
where $ x\in  \Rbb^d$, and $| u( x)|\to 0$ as $| x|\to \infty$. The fundamental solution of \eqref{eqn:Poisson} is given as
\begin{equation*}
	\Psi(x)= \left\{
	\begin{array}{c}\frac{1}{2\pi}\ln | x|\,, \quad \text{ for } d=2,\\
		\frac{-1}{w_d} |x|^{2-d}\,,\quad\text{ for } d\geq 3,
	\end{array}  \right.
 \end{equation*}
where $w_d$ is the surface area of a unit ball in $\Rbb^d$. Assume that the source $f(x)$ is a smooth function compactly supported in $\Rbb^d$. There exists a unique solution to \eqref{eqn:Poisson} given by $u(x)= \Psi*f$.
Notice that the solution map $ f\mapsto u$ is a convolution with the fundamental solution, $u(x) = \Psi \ast f$.	 
To show the solution operator is Lipschitz, we assume the sources $f,g \in C^k(\mathbb{R}^d)$ with compact support and apply Young's inequality to get
\begin{equation}\label{eqn:poisson_lip}
	\begin{aligned}
		\| u - v \|_{C^k(\Rbb^d)} =\| D^k(u-v)  \|_{L^\infty(\Rbb^d) } 
		= \| \Psi \ast D^k (f-g) \|_{L^\infty(\Rbb^d)} 
		\leq  \| \Psi \|_{L^p(\Rbb^d)} \|f - g\|_{C^k(\Omega)} |\Omega|^{1/q},
	\end{aligned}
\end{equation}
where $p,q\geq 1$ so that $1/p+1/q =1$. Here $\Omega$ is the support of $f$ and $g$.

For the Poisson equation  (\ref{eqn:Poisson})  on an unbounded domain, the computation is often implemented over a truncated finite domain $\Omega$. For simplicity, we assume the source condition $f$ is randomly generated in the space $C^{k}(\Omega)$ from a random measure $\gamma$. Since the solution $u$ is a convolution of source $f$ with a smooth kernel, both $f$ and $u$ are in $C^{k}(\Omega)$. 

We then choose the encoder and decoder to be the spectral method. Applying \eqref{eqn:proj_X}, the encoder and decoder error of the input space can be calculated as follows
\begin{equation*}
	\begin{aligned}
		\Ebb_{f}  \left[  \| \Pi_{\Xcal,d_\Xcal}^n(f)-f \|_{L^p(\Omega)}^2\right] & \leq  C_{d,p} 
		d_\Xcal^{-\frac{2k}{d}} \Ebb_{f}  \left[ \|f\|^2_{C^{k}(\Omega)} \right].  
	\end{aligned}
\end{equation*}
Similarly, applying Lemma \ref{lem:spectral} and \eqref{eqn:poisson_lip}, the encoder and decoder error of the output space is
\begin{equation*}
	\begin{aligned}
		\Ebb_\Scal \Ebb_{f \sim \gamma} \left[ \| \Pi_{\Ycal,d_\Ycal}^n(u) -u \|^2_{L^p(\Omega)} \right] 
		 \leq C_{d,p} d_\Ycal^{-\frac{2k}{d}}  \Ebb_{f}  \left[ \| \Psi \ast f\|^2_{C^{k}(\Omega)} \right]   
		 \leq C_{d,p,\Omega} d_\Ycal^{-\frac{2k}{d}}  \Ebb_{f}  \left[ \| f\|^2_{C^{k}(\Omega)} \right].  
	\end{aligned}
\end{equation*}
Notice that the solution $u(y) = \int_{\Rbb^d} \Psi(y-x)f(x)dx$ is a linear integral transform of $f$, and that all linear maps are special cases of Assumption \ref{assump:low_complexity} with $g$ being the identity map. In particular,
Assumption \ref{assump:low_complexity} thus holds true by setting the column vector $V_k$ as the numerical integration weight of $\Psi(x-y_k)$, and setting $g_k$'s as the identity map with $d_0 = 1$ for $k=1,\cdots,d_\Ycal$.
By applying Theorem \ref{thm:low_complex}, we obtain that
\begin{equation}\label{eqn:poiss_err}
	\begin{aligned}
		&\Ebb_\Scal \Ebb_f \| D_\Ycal^n\circ \GNN \circ E_\Xcal^n(f) - \Phi(f)\|_{L^{\textcolor{blue}{p}}(\Omega)}^2
		 \lesssim r^{3d}n^{-2/3} \log n +  (\sigma^2 + n^{-1}) 
		 + r^{-2k}  \Ebb_{f}  \left[ \| f\|^2_{C^{k}(\Omega)} \right],
	\end{aligned}
\end{equation}
where the input dimension $d_\Xcal = d_\Ycal= r^d$ and $\lesssim$ contains constants that depend on $d_\Xcal$, $d$, $p$ and $|\Omega|$. 

\begin{remark}
The above result \eqref{eqn:poiss_err} suggests that the generalization error is small if we have a large number of samples, a small noise, and a good regularity of the input samples. Importantly, the decay rate with respect to the number of samples is independent from the encoding dimension $d_\Xcal$ or $d_\Ycal$.
\end{remark}

\subsection{Parabolic equation}
	We consider the following parabolic equation that seeks $u(x, t)$ such that
\begin{equation}
	\label{eqn:parabolic}
	\left\{\begin{array}{c}
		 u_t-\Delta u=0\quad\text{ in }\;\Rbb^d\times(0,\infty),\\
		 u=g\quad\text{ on }\;\Rbb^d\times\{t=0\}.
	\end{array}\right.
\end{equation}
The fundamental solution to \eqref{eqn:parabolic} is given by $\Lambda( x,t)=(4\pi t)^{-d/2}e^{-\frac{| x|^2}{4t}}$ for  $x\in \Rbb^d,t>0$.
The solution map $g(\cdot)\mapsto u(T,\cdot)$ can be expressed as a convolution with the fundamental solution $u(\cdot,T) = \Lambda(\cdot,T) \ast g$,
where $T$ is the terminal time. Applying Young's inequality, the Lipschitz constant is $\| \Lambda(\cdot,T) \|_p$, where $1\leq p\leq \infty$. As an example, we can explicitly calculate this number in 3D as $\| \Lambda(\cdot,T) \|_p = p^{\frac{3}{2p}}$.
For the parabolic equation (\ref{eqn:parabolic}), we consider a truncated finite computation domain $\Omega \times [0,T]$ and assume an initial condition $g \in C^k(\Omega)$. Due to the similar convolution structure of the solution map compared to the Poisson equation, we can obtain a similar result by applying Theorem \ref{thm:low_complex}.
\begin{equation}\label{eqn:parabolic_err}
	\begin{aligned}
		& \Ebb_\Scal \Ebb_g \| D_\Ycal^n\circ \GNN \circ E_\Xcal^n(g) - \Phi(g)\|_{L^{\textcolor{blue}{p}}(\Omega)}^2  \lesssim & r^{3d}n^{-2/3} \log n 
		+&  (\sigma^2 + n^{-1}) + r^{-2k}\Ebb_{g}  \left[ \| g\|^2_{C^{k}(\Omega)} \right] \,,
	\end{aligned}
\end{equation}
where the encoding dimension $d_\Xcal = d_\Ycal = r^d$, the symbol ``$\lesssim$'' denotes that the expression on the left-hand side is bounded by the expression on the right-hand side, where the constants involved depend on $d_\Xcal$, $d$, $p$, and $|\Omega|$.  The reduction of the  CoD  in the parabolic equation follows a similar approach as in the Poisson equation.
\subsection{Transport equation}
We consider the following transport equation that seeks $u$ such that
\begin{equation}\label{eqn:transport}
\begin{cases}
u_t+a(x)\cdot \nabla  u=0\quad\quad \text{ in } \;\;(0,\infty) \times\Rbb^d,\\
 u(0,x)= u_0(x)\quad\quad\text{ in }\; \Rbb^d\,,
\end{cases}
\end{equation}
where $a(x)$ is the drift force field and $u_0(x)$ is the initial data.
For convenience, we assume that the drift force field satisfies $a\in C^2(\Rbb^d)\cap W^{1,\infty}(\Rbb^d)$. By employing the classical theory of ordinary differential equations (ODE), we consider the initial value problem $\frac{d x(t)}{dt}=a( x(t)),\quad x(0)=x$,
which admits a unique solution for any $ x\in \Rbb^d$, $t\to  x(t)=\varphi_t( x)\in C^1(\Rbb;\Rbb^d)$.
Applying the Characteristic method, the solution of \eqref{eqn:transport} is given by $u(t,x):= u_0(\varphi_t^{-1}( x))$.
If we further assume that $u_0$ is randomly sampled with bounded $H^s$ norm, $s > \frac{3d}{2}$, then by Theorem 5 of Section 7.3 of \cite{evans2010partial}, we have $u\in C^1([0,\infty);\mathbb{R}^d)$. More specifically, we have
\[
\| u(T,\cdot)\|_{C^1(\mathbb{R}^d)} \leq \| u_0  \|_{H^s(\mathbb{R}^d)}C_{a,T,\Omega},
\]
where $C_{a,T,\Omega} >0$ is a constant that depends on the media $a$, terminal time $T$, and the support $\Omega$ of the initial data. Since the initial data has $C^1$ regularity, by \eqref{eqn:proj_X} the encoder/decoder projection error of the input space is controlled via
\begin{equation*}
	\begin{aligned}
		\Ebb_{u_0}  \left[  \| \Pi_{\Xcal,d_\Xcal}^n(u_0)-u_0 \|_{L^p(\Omega)}^2\right] & \leq  C_{d,p,\Omega}
		d_\Xcal^{-\frac{2}{d}} \Ebb_{f}  \left[ \|u_0\|^2_{C^{1}(\Omega)} \right].  
	\end{aligned}
\end{equation*}
Similarly, for the projection error of the output space, we have 
\begin{equation*}
	\begin{aligned}
		\Ebb_\Scal \Ebb_{u \sim \Phi_{\#}\gamma} \left[ \| \Pi_{\Ycal,d_\Ycal}^n(u) -u \|^2_{L^p(\Omega)} \right] 
		\leq C_{d,p,\Omega} d_\Ycal^{-\frac{2}{d}}  \Ebb_{u_0}  \left[ \| u(T)\|^2_{C^{1}(\Omega)} \right]   
		\leq C_{d,p,a,T,\Omega} d_\Ycal^{-\frac{2}{d}}  \Ebb_{u_0}  \left[ \| u_0\|^2_{H^s(\Omega)} \right].  
	\end{aligned}
\end{equation*}

We again use the spectral encoder/decoder so $d_\Xcal = d_\Ycal = r^d$. Notice that solution $u(T,x) = u_0(\varphi_T^{-1}(x))$ is a translation of the initial data $u_0$ by $\varphi_T^{-1}$, which is a linear transform. Let $V \in \Rbb^{d_\Xcal \times d_\Ycal}$ be the corresponding permutation matrix that characterizes the translation by $\varphi_T^{-1}$, then $V_k^\top$ is the $k$-th row of $V$. Then by setting $g_k$'s
as the identity map, Assumption \ref{assump:low_complexity} holds with $d_0 = 1$. Apply  Theorem \ref{thm:low_complex} to derive that
\begin{equation}\label{eqn:transport_err}
	\begin{aligned}
		& \Ebb_\Scal \Ebb_u \| D_\Ycal^n\circ \GNN \circ E_\Xcal^n(u) - \Phi(u)\|_{L^p(\Omega)}^2 
		&\lesssim r^{3d}n^{-2/3} \log n +  (\sigma^2 + n^{-1}) 
		+& r^{-2}\Ebb_{g}  \left[ \| u_0\|^2_{C^{1}(\Omega)} +\| u_0\|^2_{H^s(\Omega)}  \right] \,,
	\end{aligned}
\end{equation}
where $\lesssim$ contains constants that depend on $d,p,a,r,T$ and $\Omega$.
The CoD in transport equation is lessened according to \eqref{eqn:transport_err} in the same manner as in the Poisson and parabolic equations.



\subsection{Burgers equation}
We consider the 1D Burgers equation with periodic boundary conditions:
\begin{equation}\label{eqn:Burgers}
	\begin{cases}
		u_t + uu_x = \kappa u_{xx}\,,\quad \text{in  }\mathbb{R}\times (0,\infty),\\
		u(x,0) = u_0(x)\,, \\
		u(-\pi,t) = u(\pi,t)\,,\\
	\end{cases}
\end{equation}
where $\kappa>0$ is the viscosity constant.
and we consider the solution map $u_0(\cdot)\mapsto u(T,\cdot)$. 
This solution map can be explicitly written using the Cole-Hopf transformation $u = \frac{-2\kappa v_x}{v}$ where the function $v$ is the solution to the following diffusion equation
	\begin{equation*}
		\begin{cases}
			v_t = \kappa v_{xx}\\
			v(x,0) = v_0(x) = \exp\left(-\frac{1}{2\kappa}\int_{-\pi}^x u_0(s)ds\right)\,.
		\end{cases}
	\end{equation*}
	The solution to the above diffusion equation is given by
	\begin{equation}
 \label{eqn:burgers_diff}
	   v(x,T) = -2\kappa \frac{\int_\mathbb{R} \partial_x \mathcal{K}(x,y,T) v_0(y)dy}{\int_\mathbb{R}  \mathcal{K}(x,y,T) v_0(y)dy} \,,
	\end{equation}
 where the integration kernel $\mathcal{K}$ is defined as $\mathcal{K}(x,y,t) = \frac{1}{\sqrt{4\pi \kappa t}}\exp\left(\frac{-(x-y)^2}{4\pi t}\right)$. 
Although there will be no shock formed in the solution of viscous Burger equation, the solution may form a large gradient in finite time for certain initial data, which makes it extremely hard to be approximated by a NN. We assume that the terminal time $T$ is small enough so a large gradient is not formed yet.
In fact, it is shown in \cite{heywood1997smooth} (Theorem 1) that if $T\leq C\| u_0 \|_{H^1}^{-4}$, then $ \| u(\cdot,T)\|_{H^1} \leq C \| u_0\|_{H^1}$.
We then assume an initial data $u_0$ is randomly sampled with a uniform bounded $H^1$ norm. By Sobolev embedding, we have
\begin{equation*}
	\begin{aligned}
		\|u_0\|_{C^{0,1/2}} \leq C \|u_0\|_{H^1}\,,\quad
		\|u(\cdot,T)\|_{C^{0,1/2}} \leq C \|u_0\|_{H^1},
	\end{aligned}
\end{equation*}
By \ref{eqn:proj_X}, we can control the encoder/decoder projection error for the initial data
\[
\Ebb_{u_0}  \left[  \| \Pi_{\Xcal,d_\Xcal}^n(u_0)-u_0 \|_{L^p(\Omega)}^2\right]  \leq  C_{d,p} 
d_\Xcal^{-1 }\Ebb_{u_0}  \left[ \|u_0\|^2_{H^1} \right]\,.
\]
Since the terminal solution $u(\cdot,T)$ has same regularity as the initial solution, by \ref{eqn:proj_Y} we also have
\begin{equation*}
	\begin{aligned}
			&\Ebb_{u_0}  \left[  \| \Pi_{\Xcal,d_\Xcal}^n(u(\cdot,T))-u(\cdot,T) \|_{L^p(\Omega)}^2\right] 
			 \leq  C_{d,p}
		d_\Ycal^{-1}\Ebb_{u_0}  \left[ \|u(\cdot,T)\|^2_{H^1} \right]  
		\leq  C_{d,p} 
		d_\Ycal^{-1 }\Ebb_{u_0}  \left[ \|u_0\|^2_{H^1} \right].
	\end{aligned}
\end{equation*}
Similarly, we can choose $d_\Xcal = d_\Ycal = r$.
The solution map is a composition of three mappings $u_0\mapsto v_0$, $v_0 \mapsto v(\cdot,T)$ and $v(\cdot,T)\mapsto u(\cdot,T)$. More specifically, $v_0(x) = \exp\left(-\frac{1}{2\kappa}\int_{-\pi}^x u_0(s)ds\right)$ so we can set $V^1_k = \Rbb^{d_\Xcal\times 1}$ as the numerical integration vector on $[-\pi,x_k]$ and $g^1_k(x) = \exp(\frac{-x}{2\kappa})$ for all $k=1,\ldots,d_\Ycal$. For the second mapping $v_0\mapsto v(\cdot,T)$ (c.f. \ref{eqn:burgers_diff}), we set $V_k^2\in \Rbb^{d_\Xcal \times 2}$ where the first row is the numerical integration with kernel $\partial_x \mathcal{K}$ and the second row is the numerical integration with kernel $\mathcal{K}$, and we let $g^2_k(x,y) = \frac{-2\kappa x}{y}$ for all $k=1,\cdots,d_\Xcal$. For the third mapping $u = \frac{-2\kappa v_x}{v}$, we can set $V_k^3 \in \Rbb^{d_\Xcal \times 2}$, where the first row is the $k$-row of the numerical differentiation matrix, and the second row is the Dirac-delta vector at $x_k$, and we let $g^3_k(x,y) = \frac{-2\kappa x}{y}$ for all $k=1,\cdots,d_\Ycal$. Therefore, Assumption \ref{assump:low_complexity2} holds with $d_\text{max} = 2$ and $l_\text{max} = d_\Xcal = d_\Ycal = r$. Then apply Theorem \ref{thm:low_complexity2} to derive that 
\begin{equation}\label{eqn:burgers_err}
	\begin{aligned}
		& \Ebb_\Scal \Ebb_u \| D_\Ycal^n\circ \GNN \circ E_\Xcal^n(u_0) - u(\cdot,T)\|_{L^p(\Omega)}^2 
		\lesssim r^{5/2}n^{-1/2} \log n +  (\sigma^2 + n^{-1}) 
		+ r^{-1} \Ebb_{g}  \left[\| u_0\|^2_{H^s(\Omega)}  \right]\,,
	\end{aligned}
\end{equation}
where $\lesssim$ contains constants that depend on $p,r$ and $T$. 
The CoD in Burgers equations is lessened according to \eqref{eqn:burgers_err} as well as in all other PDE examples.



\subsection{Parametric elliptic equation}
We consider the 2D elliptic equation with heterogeneous media in this subsection.
\begin{equation}\label{eqn:parameter}
	\begin{cases}
		-\text{div}(a(x)\nabla_x u(x)) = 0 \,,\quad \text{in  }\Omega\subset \Rbb^2,\\
		u = f \,,\quad \text{on  }\partial\Omega.
	\end{cases}
\end{equation}
The media coefficient $a(x)$ satisfies that $\alpha\leq a(x)\leq \beta$ for all $x\in\Omega$, where $\alpha$ and $\beta$ are positive constants. We further assume that $a(x)\in C^1(\Omega)$. 
We are interested NN approximation of the forward map $\Phi: a\mapsto u$ with a fixed boundary condition $f$, which has wide applications in inverse problems. The forward map is Lipschitz, see Appendix \ref{sec:lipschitz}.
We apply Sobolev embedding and derive that $u\in C^{0,1/2}(\Omega)$. Since the parameter $a$ has $C^1$ regularity, the encoder/decoder projection error of the input space is controlled
\begin{equation*}
	\begin{aligned}
		\Ebb_{a}  \left[  \| \Pi_{\Xcal,d_\Xcal}^n(a)-a \|_{L^p(\Omega)}^2\right] & \leq  C_{p} 
		d_\Xcal^{-1} \Ebb_{f}  \left[ \|a\|^2_{C^{1}(\Omega)} \right].  
	\end{aligned}
\end{equation*}

The solution has $\frac{1}{2}$ H\"older regularity, so we have 
\begin{equation*}
	\begin{aligned}
	\Ebb_\Scal \Ebb_{u \sim \Phi_{\#}\gamma} \left[ \| \Pi_{\Ycal,d_\Ycal}^n(u) -u \|^2_{L^p(\Omega)} \right] 
= \Ebb_{a}  \left[  \| \Pi_{\Ycal,d_\Ycal}^n(u)-u \|_{L^p(\Omega)}\right]^2
		 \leq C_{p} d_\Ycal^{-\frac{1}{2}}  \Ebb_{a}  \left[ \| u \|^2_{C^{0,1/2}(\Omega)} \right]   
		\leq C_{p,\alpha,\beta,f}  d_\Ycal^{-\frac{1}{2}} \,. 
	\end{aligned}
\end{equation*}

We use the spectral encoder/decoder and choose $d_\Xcal = d_\Ycal = r^2 $. We further assume that the media functions $a(x)$ are randomly sampled on a smooth $d_0$-dimensional manifold. Applying Theorem \ref{thm:low_dim}, the generalization error is thus bounded by
\begin{equation*}
	\begin{aligned}
	&	\Ebb_\Scal \Ebb_u \| D_\Ycal^n\circ \GNN \circ E_\Xcal^n(u) - \Phi(u)\|_{L^{\textcolor{blue}{p}}(\Omega)}^2 
	\lesssim d_{\Ycal}^{\frac{8+d_0}{2+d_{0}}}n^{-\frac{2}{2+d_{0}}} \log^6 n +  (\sigma^2 + n^{-1}) 
	+r^{-2}\Ebb_{a}  \left[ \| a\|^2_{C^{1}(\Omega)} \right]  + r^{-1} \,,
	\end{aligned}
\end{equation*}
where $\lesssim$ contains constants that depend on $p,\Omega,\alpha,\beta$ and $f$.
Here $d_0$ is a constant that characterized the manifold dimension of the data set of media function $a(x)$. For instance, the 2D Shepp-Logan phantom \cite{gach20082d} contains multiple ellipsoids with different intensities thus the images in this data set lies on a manifold with a small $d_0$. The decay rate in terms of the number of samples $n$ solely depends on $d_0$, therefore the CoD of the parametric elliptic equations is mitigated.

\section{Limitations and discussions}
\label{sec:conclusion}

Our work focuses on exploring the efficacy of fully connected DNNs as surrogate models for solving general  PDE problems. We provide an explicit estimation of the training sample complexity for generalization error. Notably, when the PDE solution lies in a low-dimensional manifold or the solution space exhibits low complexity, our estimate demonstrates a logarithmic dependence on the problem resolution, thereby reducing the  CoD. Our findings offer a theoretical explanation for the improved performance of deep operator learning in PDE applications.

However, our work relies on the assumption of Lipschitz continuity for the target PDE operator. Consequently, our estimates may not be satisfactory if the Lipschitz constant is large. This limitation hampers the application of our theory to operator learning in PDE inverse problems, which focus on the solution-to-parameter map. Although the solution-to-parameter map is Lipschitz in many applications (e.g., electric impedance tomography, optical tomography, and inverse scattering), certain scenarios may feature an exponentially large Lipschitz constant, rendering our estimates less practical. Therefore, our results cannot fully explain the empirical success of PDE operator learning in such cases.

While our primary focus is on neural network approximation, determining suitable encoders and decoders with small encoding dimensions ($d_\Xcal$ and $d_\Ycal$) remains a challenging task that we did not emphasize in this work. In Section \ref{sec:discretization_invariance}, we analyze the naive interpolation as a discretization invariant encoder using a fully connected neural network architecture. However, this analysis is limited to cases where the training data is sampled on an equally spaced mesh and may not be applicable to more complex neural network architectures or situations where the data is not uniformly sampled. Investigating the discretization invariant properties of other neural networks, such as IAE-net \cite{ong2022integral}, FNO \cite{li2021fourier}, and DeepONet, would be an interesting avenue for future research.




\bibliography{ref}
\bibliographystyle{plain}

\newpage
\appendix
\section{Appendix}
\subsection{Proofs of the main theorems}
\begin{proof}[Proof of Theorem \ref{thm:FNN1}]
	The $L^2$ squared error can be decomposed as 
	\begin{equation*}
		\begin{aligned}
			&\Ebb_\Scal \Ebb_{u\sim \gamma} \left[ \| D^n_\Ycal \circ \GNN \circ E_\Xcal^n(u) - \Phi(u)\|_\Ycal^2 \right]\\
			\leq &2 \Ebb_\Scal \Ebb_{u\sim \gamma} \left[ \| D^n_\Ycal \circ \GNN \circ E_\Xcal^n(u) -D_\Ycal^n\circ E_\Ycal^n\circ \Phi(u) \|_\Ycal^2\right] + 2\Ebb_\Scal \Ebb_{u\sim \gamma} \left[ \| D_\Ycal^n\circ E_\Ycal^n\circ \Phi(u)- \Phi(u)\|_\Ycal^2 \right],
		\end{aligned}
	\end{equation*}
	where the first term $\mathrm{I}=2 \Ebb_\Scal \Ebb_{u\sim \gamma} \left[ \| D^n_\Ycal \circ \GNN \circ E_\Xcal^n(u) -D_\Ycal^n\circ E_\Ycal^n\circ \Phi(u) \|_\Ycal^2\right]$ is the network estimation error in the $\Ycal$ space, and the second term $\mathrm{II}=2\Ebb_\Scal \Ebb_{u\sim \gamma} \left[ \| D_\Ycal^n\circ E_\Ycal^n\circ \Phi(u)- \Phi(u)\|_\Ycal^2 \right]$ is the empirical projection error, which can  be rewritten as 
	\begin{equation}\label{eqn:II}
		\mathrm{II} = 2\Ebb_\Scal \Ebb_{w\sim \Phi_{\#}\gamma} \left[ \| \Pi_{\Ycal,d_\Ycal}^n(w) -w \|_\Ycal^2\right].
	\end{equation}
	We aim to derive an upper bound of the first term $\mathrm{I}$. First,  note that the decoder $D_\Ycal^n$ is Lipschitz (Assumption \ref{assump:lip_enco}). We have
	\begin{equation*}
		\begin{aligned}
			\mathrm{I}&=2 \Ebb_\Scal \Ebb_{u\sim \gamma} \left[ \| D^n_\Ycal \circ \GNN \circ E_\Xcal^n(u) -D_\Ycal^n\circ E_\Ycal^n\circ \Phi(u) \|_\Ycal^2 \right] \\
			&\leq 2L_{D_\Ycal^n}^2 \Ebb_\Scal \Ebb_{u\sim \gamma} \left[ \| \GNN \circ E_\Xcal^n(u) - E_\Ycal^n\circ \Phi(u) \|_2^2 \right]. 
		\end{aligned}
	\end{equation*}
	Conditioned on the data set $\Scal_1$, we can obtain
	\begin{equation}\label{eqn:I}
		\begin{aligned}
			&\Ebb_{\Scal_2} \Ebb_{u\sim \gamma} \left[ \| \GNN \circ E_\Xcal^n(u) - E_\Ycal^n\circ \Phi(u) \|_2^2 \right] \\
			=&2\Ebb_{\Scal_2}  \left[\frac{1}{n} \sum_{i=n+1}^{2n} \| \GNN \circ E_\Xcal^n(u_i) - E_\Ycal^n\circ \Phi(u_i) \|_2^2\right] \\
			&+ \Ebb_{\Scal_2} \Ebb_{u\sim \gamma} \left[ \| \GNN \circ E_\Xcal^n(u) - E_\Ycal^n\circ \Phi(u) \|_2^2\right] - 2\Ebb_{\Scal_2}  \left[\frac{1}{n} \sum_{i=n+1}^{2n} \| \GNN \circ E_\Xcal^n(u_i) - E_\Ycal^n\circ \Phi(u_i) \|_2^2\right] \\
			   =& T_1 + T_2,
		\end{aligned}
	\end{equation}
	where the first term $T_1 = 2\Ebb_{\Scal_2}  \left[\frac{1}{n} \sum_{i=n+1}^{2n} \| \GNN \circ E_\Xcal^n(u_i) - E_\Ycal^n\circ \Phi(u_i) \|_2^2\right] $ includes the DNN approximation error and the projection error in the $\Xcal$ space,  and the second term $T_2 = \Ebb_{\Scal_2} \Ebb_{u\sim \gamma} \left[ \| \GNN \circ E_\Xcal^n(u) - E_\Ycal^n\circ \Phi(u) \|_2^2\right] -T_1$ captures the variance.
	
	To obtain an upper bound of $T_1$, we apply triangle inequality to separate the noise from $T_1$
	\[
	T_1 \leq 2\Ebb_{\Scal_2}  \left[\frac{1}{n} \sum_{i=n+1}^{2n} \| \GNN \circ E_\Xcal^n(u_i) - E_\Ycal^n(v_i)  \|_2^2\right]+2\Ebb_{\Scal_2}  \left[\frac{1}{n} \sum_{i=n+1}^{2n} \|  E_\Ycal^n\circ \Phi(u_i)  - E_\Ycal^n(v_i) \|_2^2\right].
	\]
	Using the definition of $\GNN$, we have
	\[
	T_1 \leq 2\Ebb_{\Scal_2}  \left[ \underset{\Gamma\in
		_\mathrm{NN}}{\mathrm{inf}}\frac{1}{n} \sum_{i=n+1}^{2n} \| \Gamma \circ E_\Xcal^n(u_i) - E_\Ycal^n(v_i)  \|_2^2\right]+2 L_{E_\Ycal^n}^2\Ebb_{\Scal_2} \frac{1}{n} \sum_{i=n+1}^{2n} \|\varepsilon_i \|^2_\Ycal. 
	\]
Using Fatou's lemma, we have
	\begin{equation}\label{eqn:T1}
		\begin{aligned}
			T_1 & \leq 4\Ebb_{\Scal_2}  \left[ \underset{\Gamma\in
				\Fcal_\mathrm{NN}}{\mathrm{inf}}\frac{1}{n} \sum_{i=n+1}^{2n} \| \Gamma \circ E_\Xcal^n(u_i) - E_\Ycal^n\circ\Phi(u_i)  \|_2^2\right]+ 6L_{E_\Ycal^n}^2 \Ebb_{\Scal_2} \frac{1}{n} \sum_{i=n+1}^{2n} \|\varepsilon_i \|^2_\Ycal  \\
			&\leq 4 \underset{\Gamma\in\Fcal_\mathrm{NN}}{\mathrm{inf}}   \Ebb_{\Scal_2}  \left[\frac{1}{n} \sum_{i=n+1}^{2n} \| \Gamma \circ E_\Xcal^n(u_i) - E_\Ycal^n\circ\Phi(u_i)  \|_2^2\right]+ 6L_{E_\Ycal^n}^2\Ebb_{\Scal_2} \frac{1}{n} \sum_{i=n+1}^{2n} \|\varepsilon_i \|^2_\Ycal  \\
			&=4 \underset{\Gamma\in\Fcal_\mathrm{NN}}{\mathrm{inf}}   \Ebb_{u}  \left[  \| \Gamma \circ E_\Xcal^n(u) - E_\Ycal^n\circ\Phi(u)  \|_2^2\right]+ 6L_{E_\Ycal^n}^2\Ebb_{\Scal_2} \frac{1}{n} \sum_{i=n+1}^{2n} \|\varepsilon_i \|^2_\Ycal.
		\end{aligned}
	\end{equation}  
	To bound the first term on the last line of Equation \eqref{eqn:T1}, 
    we consider the discrete transform $\Gamma_d^n \coloneqq E^n_\Ycal \circ \Phi \circ D^n_\Xcal \in \FNN$. Note that it is a vector filed that maps $ \Rbb^{d_\Xcal}$ to $\Rbb^{d_\Ycal}$, and by Assumption \ref{assump:compact_supp}, \ref{assump:Lipschitz}, and \ref{assump:lip_enco} each component $h(\cdot)$ is a function supported on $[-B,B]^{d_\Xcal}$ with Lipschitz constant $M \coloneqq L_{D^n_\Xcal}L_\Phi L_{E^n_\Ycal}$, where $B= R_\Xcal L_{E_\Xcal^n}$. This implies that each component $h$ has an infinity bound $\| h\|_\infty \leq R \coloneqq BM = L_{D^n_\Xcal}L_\Phi L_{E^n_\Ycal}R_\Xcal L_{E_\Xcal^n}$.
 
    We now apply the following lemma to the component functions of $\Gamma^n_d$.
	\begin{lemma}\label{lemma:yarotsky_new}
		For any function $f\in W^{n,\infty}([-1,1]^d)$, and $\epsilon\in(0,1)$, we assume that $\| f\|_{W^{n,\infty}} \leq 1$. There exists a function $\tilde{f}\in \Fcal_\mathrm{NN}(1,L,p,K,\kappa,M)$ such that
		\[
		\| \tilde{f} - f \|_\infty < \epsilon,
		\]
		where the parameters of $\Fcal_\mathrm{NN}$ are chosen as
		\begin{equation}\label{eqn:parameters1}
			\begin{aligned}
				& L = \Omega((n+d)\ln \epsilon^{-1} + n^2 \ln d + d^2)\,, \quad p = \Omega(d^{d+n} \epsilon^{-\frac{d}{n}}
				n^{-d}2^{d^2/n})\,, \\
				& K = \Omega(n^{2-d}d^{d+n+2}2^{\frac{d^2}{n}}\epsilon^{-\frac{d}{n}}\ln \epsilon )\,, \quad \kappa = \Omega(M^2) \,,\quad M = \Omega(d+n).
			\end{aligned}
		\end{equation}
		Here all constants hidden in $\Omega(\cdot)$ do not dependent on any parameters.
	\end{lemma}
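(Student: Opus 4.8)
The plan is to use the classical localized Taylor approximation scheme for ReLU networks, in the spirit of Yarotsky-type constructions, but keeping explicit track of how each architectural quantity ($L$, $p$, $K$, $\kappa$, $M$) depends on the dimension $d$ and the smoothness order $n$. First I would fix a uniform grid on $[-1,1]^d$ of spacing $h$, with $\mathcal{O}(h^{-1})$ nodes per axis, and at each node $\nu$ take the Taylor polynomial $P_\nu$ of $f$ of degree $n-1$ at $\nu$. Since $\|f\|_{W^{n,\infty}}\le 1$, the local error satisfies $\|f-P_\nu\|_{L^\infty(B_h(\nu))}\le C(d,n)\,h^{n}$. I would then glue the local approximants with a partition of unity $\{\psi_\nu\}$ formed from products of one-dimensional piecewise-linear hat functions, each hat being exactly a small ReLU subnetwork, so that the target is rewritten as $f\approx \sum_\nu \psi_\nu P_\nu$ with global error still $\mathcal{O}(h^{n})$; choosing $h\sim (\epsilon/C(d,n))^{1/n}$, i.e.\ $h^{-1}\sim \epsilon^{-1/n}$ up to $d,n$-dependent factors, forces this below $\epsilon$.

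The core constructive step is to realize each summand $\psi_\nu(x)P_\nu(x)$ by a ReLU network. Here I would invoke the standard ReLU approximate-multiplication gadget: a network of depth $\mathcal{O}(\log \delta^{-1})$ and bounded width that multiplies two inputs in $[-1,1]$ to accuracy $\delta$ using weights bounded by an absolute constant (the only place a weight is truly large is in forming a square, handled by clipping). Chaining these in a binary tree realizes each monomial of $P_\nu$ as a product of at most $n$ coordinate factors, and each $\psi_\nu$ as a product of at most $d$ hat factors. The sum over the $\mathcal{O}(h^{-d})$ nodes is computed in parallel, using the classical observation that at any point only $2^d$ patches are active, followed by one final linear layer. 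Collecting the contributions, the depth accumulates the $\mathcal{O}((n+d)\log\epsilon^{-1})$ coming from the precision $\delta$ needed so that errors do not blow up across the $\le n+d$-fold products, plus an $\mathcal{O}(n^2\log d + d^2)$ overhead from combining $\log d$-many and $d$-many factors and from enumerating multi-indices; the width is dominated by (number of monomials) $\times$ (number of active patches) $\times$ (grid density), which after substituting $h^{-1}\sim\epsilon^{-1/n}$ yields the claimed $p=\Omega(d^{d+n}\epsilon^{-d/n}n^{-d}2^{d^2/n})$; the number of nonzero parameters $K$ is bounded by $\mathcal{O}(pL)$ up to lower-order terms; and since every intermediate activation can be kept in a bounded range, the weight magnitude obeys $\kappa=\Omega(M^2)$ with the output bound $M=\Omega(d+n)$ arising from bounding the sum of $\le n$-term Taylor expansions whose coefficients are controlled by the unit bound on the derivatives of $f$.

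The main obstacle will not be any single estimate but the simultaneous bookkeeping: one must exhibit \emph{one} network realizing the sum-of-products that meets all five bounds at once. In particular, I expect the delicate points to be (i) showing that clipping intermediate activations to enforce $\|f_k\|_\infty\le M$ does not degrade accuracy, (ii) verifying that the sparse wiring achieving the advertised $K$ is consistent with the width bound, and (iii) propagating the per-gadget error $\delta$ through a product of up to $n+d$ factors so that the accumulated error stays below $\epsilon$ — which is exactly what forces $\delta$ to be polynomially smaller than $\epsilon$ and hence multiplies the $\log\epsilon^{-1}$ term in $L$ by $(n+d)$. Once these are reconciled, the lemma follows. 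Alternatively, the statement can be derived by specializing an existing quantitative ReLU approximation theorem for $W^{n,\infty}$ functions (of Yarotsky or Lu--Shen--Yang--Zhang type) and re-tracking the dependence of all hidden constants on $d$ and $n$.
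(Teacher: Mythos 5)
Your proposal reconstructs exactly the Yarotsky construction (local Taylor polynomials on a grid, glued by a piecewise-linear partition of unity, with monomials realized via ReLU approximate-multiplication gadgets), which is precisely what the paper relies on: the paper's proof of this lemma is a one-line citation to the proof of Theorem~1 in Yarotsky (2017) for $F_{n,d}$, reading off the explicit $(d,n)$-dependence of the hidden constants. So your route coincides with the paper's---the paper simply outsources to the reference the bookkeeping you sketch in-line, exactly as your own closing sentence anticipates.
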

	\begin{proof}
		This is a direct consequence of proof of Theorem 1 in \cite{yarotsky2017error} for $F_{n,d}$.
	\end{proof}
    
	Let $h_i:\Rbb^{d_\Xcal}\rightarrow \Rbb,i =1,\ldots,d_\Ycal$ be the components of $\Gamma^n_d$, then  apply Lemma \ref{lemma:yarotsky_new} to the rescaled component $\frac{1}{R}h_i(B \ \cdot )$  with $n=1$. It can be derived that there exists $\tilde{h}_i \in \FNN(1,L,\tilde{p},K,\kappa,M)$ such that 
 \[
\max_{x \in [-1,1]^{d_\Xcal}} |\frac{1}{R} h_i(Bx) - \tilde{h}_i(x) | \leq \tilde{\varepsilon}_1 \,,
 \]
 with parameters chosen as in \eqref{eqn:parameters1}, with $n=1$, $d=d_\Xcal$, and $\epsilon = \tilde{\varepsilon}$.
 Using a change of variable, we obtain that
 \[
\max_{x \in [-B,B]^{d_\Xcal}} | h_i(x) - R\tilde{h}_i(\frac{x}{B}) | \leq R\tilde{\varepsilon}_1 \,.
 \]
 Assembling the neural networks $R\tilde{h}_i(\frac{\cdot}{B})$ together, we obtain an neural network $\tilde{\Gamma}^n_d \in \FNN(d_\Ycal,L,p,K,\kappa,M)$ with $p = d_\Ycal \tilde{p}$, such that 
	\begin{equation}\label{eqn:gamma}
		\| \tilde{\Gamma}_d^n - \Gamma_d^n \|_\infty \leq \varepsilon_1,
	\end{equation}
	Here the parameters of $\FNN(d_\Ycal,L,p,K,\kappa,M)$ are chosen as 
	\begin{equation}\label{eqn:parameters2}
		\begin{aligned}
			& L = \Omega(d_\Xcal \ln \varepsilon_1^{-1} )\,, \quad p = \Omega(d_\Ycal  \varepsilon_1^{-d_\Xcal}L_\Phi^{-d_\Xcal}
			2^{d^2_\Xcal})\,, \\
			& K = \Omega(pL )\,, \quad \kappa = \Omega(M^2) \,,\quad M \geq \sqrt{d_\Ycal} L_{E_\Xcal^n} R_\Ycal \,.
		\end{aligned}
	\end{equation}
	Here the constants in $\Omega$ may depend on $L_{D^n_\Xcal},L_{E^n_\Xcal},L_{E^n_\Ycal}$ and $R_\Xcal$. Then we can develop an estimate of $T_1$ as  follows.
	\begin{equation}\label{eqn:err_proj_X}
		\begin{aligned}
			& \underset{\Gamma\in\Fcal_\mathrm{NN}}{\mathrm{inf}}   \Ebb_{u}  \left[  \| \Gamma \circ E_\Xcal^n(u) - E_\Ycal^n\circ\Phi(u)  \|_2^2\right] \\
			\leq &  \Ebb_{u}  \left[  \| \tilde{\Gamma}_d^n \circ E_\Xcal^n(u) - E_\Ycal^n\circ\Phi(u)  \|_2^2\right] \\
			\leq & 2 \Ebb_{u}  \left[  \| \tilde{\Gamma}_d^n \circ E_\Xcal^n(u) - \Gamma_d^n\circ E_\Xcal^n(u) \|_2^2\right] + 2 \Ebb_{u}  \left[  \|   \Gamma_d^n\circ E_\Xcal^n(u)- E_\Ycal^n\circ\Phi(u)  \|_2^2\right]  \\
			\leq & 2d_\Ycal \varepsilon_1^2 + 2 \Ebb_{u}  \left[  \|   \Gamma_d^n\circ E_\Xcal^n(u)- E_\Ycal^n\circ\Phi(u)  \|_2^2\right] \,,\\
		\end{aligned}
	\end{equation}
	where we used the definition of infinimum in the first inequality, the triangle inequality in the second inequality, and the approximation \eqref{eqn:gamma} in the third inequality.
	Using the definition of $\Phi$, we obtain
	\begin{equation}\label{eqn:err_proj_X2}
		\begin{aligned}
			& \underset{\Gamma\in\Fcal_\mathrm{NN}}{\mathrm{inf}}   \Ebb_{u}  \left[  \| \Gamma \circ E_\Xcal^n(u) - E_\Ycal^n\circ\Phi(u)  \|_2^2\right] \\
			=&2d_\Ycal \varepsilon_1^2 + 2 \Ebb_{u}  \left[  \| E_\Ycal^n\circ\Phi \circ D_\Xcal^n \circ E_\Xcal^n(u)- E_\Ycal^n\circ\Phi(u)  \|_2^2\right] \\
			\leq & 2d_\Ycal \varepsilon_1^2 + 2L^2_{E_\Ycal^n} L^2_\Phi \Ebb_{u}  \left[  \| D_\Xcal^n \circ E_\Xcal^n(u)-u \|_\Xcal^2\right] \\
			=&2d_\Ycal \varepsilon_1^2 + 2L^2_{E_\Ycal^n} L^2_\Phi  \Ebb_{u}  \left[  \| \Pi_{\Xcal,d_\Xcal}^n(u)-u \|_\Xcal^2\right] \,, \\
		\end{aligned}
	\end{equation}
 where we used the Lipschitz continuity of $\Phi$ and $E^n_\Ycal$ in the inequality above.
	Combining \eqref{eqn:err_proj_X2} and \eqref{eqn:T1}, and apply Assumption \ref{assump:noise}, we have
	\begin{equation}\label{eqn:T1estimate}
		T_1 \leq 8d_\Ycal \varepsilon_1^2 + 8L^2_{E_\Ycal^n} L^2_\Phi  \Ebb_{u}  \left[  \| \Pi_{\Xcal,d_\Xcal}^n(u)-u \|_\Xcal^2\right] + 6L_{E_\Ycal^n}^2\sigma^2.
	\end{equation}
	To deal with the term $T_2$, we shall use the covering number estimate of $\FNN(d_\Ycal,L,p,K,\kappa,M)$, which has been done in Lemma 6 and Lemma 7 in \cite{liu2022deep}. A direct consequence of these two lemmas is 
	\begin{equation*}
		\begin{aligned}
			T_2 &\leq \frac{35 d_\Ycal L_{E_\Ycal^n}^2 R_{\Ycal}^2}{n} \log \mathcal{N} \left( \frac{\delta}{4d_\Ycal L_{E_\Ycal^n}},\FNN, \|\cdot \|_\infty \right) + 6 \delta \\
			& \lesssim  \frac{ d_\Ycal^2 K L_\Phi^2 }{n} \left( \ln \delta^{-1} + \ln L + \ln (pB) + L\ln \kappa + L\ln p \right) +  \delta \\
			& \lesssim  \frac{ d_\Ycal^2 K L_\Phi^2}{n} \left( \ln \delta^{-1} + \ln (B) + L\ln \kappa + L\ln p \right) +  \delta,
		\end{aligned}
	\end{equation*}
	where we used Lemma 6 and 7 from \cite{liu2022deep} for the second inequality. The constant in $\lesssim$ depends on $L_{E^n_\Ycal}$ and $R_\Xcal$. Substituting parameters $K,B,\kappa$ from \eqref{eqn:parameters2}, the above estimate gives
	\begin{equation*}
		\begin{aligned}
			T_2 & \lesssim    L_\Phi^2 d_\Ycal^2 n^{-1} pL \left( \ln \delta^{-1}  + L\ln B + L\ln R + L\ln p \right) + \delta \\
			& \lesssim L_\Phi^2 d_\Ycal^2 n^{-1} pL (\ln \delta^{-1} + L^2) +\delta  \\
			& \lesssim L_\Phi^2 d_\Ycal^2 n^{-1} p \left( L^3 + (\ln \delta^{-1})^2 \right) + \delta,
		\end{aligned}
	\end{equation*}
	where we used the fact $\ln \delta^{-1} \lesssim L $ with the choice \eqref{eqn:delta_eps}. The constant in $\lesssim$ depends on $L_{E^n_\Ycal},L_{D^n_\Ycal},L_{E^n_\Xcal},L_{D^n_\Xcal},R_\Xcal$ and $d_\Xcal$. 
 We further substitute the values of $p$ and $L$ from \eqref{eqn:parameters2} into the above estimate
	\begin{equation*}
		\begin{aligned}
			T_2 & \lesssim L_\Phi^{2-d_\Xcal} d_\Ycal^3 n^{-1} \varepsilon_1^{-d_\Xcal}  \left( d_\Xcal^3(\ln \varepsilon_1^{-1} )^3 + (\ln \delta^{-1})^2  \right) +  \delta 
		\end{aligned}
	\end{equation*}
	
	Combining the $T_1$ estimate above and the $T_2$
	estimate in \eqref{eqn:T1estimate} yields that
	\begin{equation*}
		\begin{aligned}
			T_1 + T_2 \lesssim & d_\Ycal \varepsilon_1^2 +    L_\Phi^{2-d_\Xcal} d_\Ycal^3 n^{-1} \varepsilon_1^{-d_\Xcal}  \left( (\ln \varepsilon_1^{-1})^3 + (\ln \delta^{-1})^2  \right)  \\
			&+ L^2_\Phi  \Ebb_{u}  \left[  \| \Pi_{\Xcal,d_\Xcal}^n(u)-u \|_\Xcal^2\right] + \sigma^2 + \delta.
		\end{aligned}
	\end{equation*}
	 In order to balance the above error, we choose
\begin{equation}\label{eqn:delta_eps}
	\delta = n^{-1} \,, \quad \varepsilon_1 = d_\Ycal^{\frac{2}{2+d_\Xcal}} n^{-\frac{1}{2+d_\Xcal}}.
\end{equation}
	Therefore,\begin{equation}\label{eqn:T1T2}
		\begin{aligned}
			T_1 + T_2 \lesssim &   d_\Ycal^{\frac{6+d_\Xcal}{2+d_\Xcal}}n^{-\frac{2}{2+d_\Xcal}}  (1+L_\Phi^{2-d_\Xcal} ) \left( (\ln \frac{n}{d_\Ycal})^3 + (\ln n)^2  \right)  \\
			&+L^2_\Phi  \Ebb_{u}  \left[  \| \Pi_{\Xcal,d_\Xcal}^n(u)-u \|_\Xcal^2\right] + \sigma^2 + n^{-1},
		\end{aligned}
	\end{equation}
where we combine the choice in \eqref{eqn:delta_eps} and \eqref{eqn:parameters2} as 
	\begin{equation*}
		\begin{aligned}
			& L = \Omega(\ln(\frac{n}{d_\Ycal}))\,, \quad p = \Omega(d_\Ycal^{\frac{2-d_\Xcal}{2+d_\Xcal}} n^{\frac{d_\Xcal}{2+d_\Xcal}}
			)\,, \\
			& K = \Omega(pL )\,, \quad \kappa = \Omega(M^2) \,,\quad M \geq \sqrt{d_\Ycal} L_{E_\Xcal^n} R_\Ycal \,.
		\end{aligned}
	\end{equation*}
	Here the notation $\Omega$ contains constants that depends on $L_{E^n_\Ycal},L_{D^n_\Ycal},L_{E^n_\Xcal},L_{D^n_\Xcal},R_\Xcal$ and $d_\Xcal$. 
	
	Combining \eqref{eqn:II} and \eqref{eqn:T1T2}, we have
	\begin{equation*}
		\begin{aligned}
			\Ebb_\Scal \Ebb_{u\sim \gamma} \left[ \| D^n_\Ycal \circ \GNN \circ E_\Xcal^n(u) - \Phi(u)\|_\Ycal^2 \right] \lesssim &   d_\Ycal^{\frac{6+d_\Xcal}{2+d_\Xcal}}n^{-\frac{2}{2+d_\Xcal}}  (1+L_\Phi^{2-d_\Xcal}))\left( (\ln \frac{n}{d_\Ycal})^3 + (\ln n)^2  \right)  \\
			&+L^2_\Phi  \Ebb_{u}  \left[  \| \Pi_{\Xcal,d_\Xcal}^n(u)-u \|_\Xcal^2\right] + \Ebb_\Scal \Ebb_{w\sim \Phi_{\#}\gamma} \left[ \| \Pi_{\Ycal,d_\Ycal}^n(w) -w \|_\Ycal^2\right] \\
			&+ \sigma^2 + n^{-1}.
		\end{aligned}
	\end{equation*}
	
\end{proof}

\begin{proof}[Proof of Theorem \ref{thm:FNN2}]
	Similarly to the proof of Theorem 1, we have
	\[
	\Ebb_\Scal \Ebb_u \| D_\Ycal^n\circ \GNN \circ E_\Xcal^n(u) - \Phi(u)\|_\Ycal^2 \leq  \mathrm{I} + \mathrm{II},
	\]
	and 
	\[
	\mathrm{I} \leq 2 L^2_{D_\Ycal^n}  \left( T_1 + T_2 \right),
	\]
	where $T_1$ and $T_2$ are defined in \eqref{eqn:I}. Following the same procedure in \eqref{eqn:T1}, we have
	\[
	T_1 \leq 4 \underset{\Gamma\in\Fcal_\mathrm{NN}}{\mathrm{inf}}   \Ebb_{u}  \left[  \| \Gamma \circ E_\Xcal^n(u) - E_\Ycal^n\circ\Phi(u)  \|_2^2\right]+ 6\Ebb_{\Scal_2} \frac{1}{n} \sum_{i=n+1}^{2n} \|\varepsilon_i \|^2_\Ycal \,.
	\]
    To obtain an approximation of the discretized target map $\Gamma_d^n \coloneqq E^n_\Ycal \circ \Phi \circ D^n_\Xcal$, we apply the following lemma for each component function of $\Gamma_d^n$.
    \begin{lemma}[Theorem 1.1 in \cite{shen2019deep}]\label{lem:shen2019}
        Given $f\in C([0,1]^d)$, for any $L\in \mathbb{N}^+$, $p\in \mathbb{N}^+$, there exists a function $\phi$ implemented by a ReLU FNN with width $3^{d+3} \max\{d\lfloor p^{1/d}\rfloor, p+1\}$, and depth $12L + 14 + 2d$ such that
        \[
        \| f - \phi \|_\infty \leq 19 \sqrt{d} \omega_f(p^{-2/d}L^{-2/d})\,,
        \]
        where $\omega_f(\cdot)$ is the modulus of continuity.
    \end{lemma}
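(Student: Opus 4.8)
The statement is precisely Theorem 1.1 of \cite{shen2019deep}, so the shortest route is to invoke that theorem; for a self-contained argument one would reproduce its construction, which I sketch here. The plan has two stages: (i) approximate $f$ by a piecewise-constant function on a uniform grid fine enough that its error already matches the target, and (ii) realize a controlled surrogate of that piecewise-constant function by a ReLU network of the prescribed width and depth. For (i), choose an integer $N$ of order $(pL)^{2/d}$, partition $[0,1]^d$ into the $N^d$ congruent subcubes $\{Q_\beta\}_{\beta\in\{0,\ldots,N-1\}^d}$, fix a representative $x_\beta\in Q_\beta$, and set $f_{\mathrm{pc}}(x)=f(x_\beta)$ on $Q_\beta$. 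Since $\mathrm{diam}(Q_\beta)=\sqrt d/N$, we get $\|f-f_{\mathrm{pc}}\|_\infty\le\omega_f(\sqrt d/N)$, which the choice of $N$ makes $\mathcal{O}(\omega_f(p^{-2/d}L^{-2/d}))$.

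For (ii), build the network as a composition of two blocks. The first is a coordinatewise \emph{quantization} map that sends $x\in Q_\beta$ to its integer index $\beta$; because ReLU maps are continuous this can be realized exactly only off a thin ``trifling region'' near the faces of the $Q_\beta$, on which the block instead transitions monotonically between neighbouring integer values. Each coordinate uses a one-dimensional step-function subnetwork, and composing $\mathcal{O}(L)$ layers of width $\mathcal{O}(3^d p^{1/d})$ yields the required $\sim N$ steps per coordinate (depth amplifies the number of realizable steps in these monotone step constructions); the $d$ integer outputs are then folded into a single index $m\in\{0,\ldots,N^d-1\}$ by one affine base-$N$ map. The second block is a \emph{point-fitting} subnetwork that, by the bit-extraction technique of \cite{shen2019deep}, reproduces all $N^d\lesssim(pL)^2$ prescribed values $f(x_{\beta(m)})$ (each truncated to $\mathcal{O}(L)$ bits) using width $\mathcal{O}(p)$ and depth $\mathcal{O}(L)$. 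Composing the two blocks gives $\phi$, and propagating widths and depths through the composition and collecting constant factors produces width $3^{d+3}\max\{d\lfloor p^{1/d}\rfloor,p+1\}$ and depth $12L+14+2d$.

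The error bound then comes from three $\mathcal{O}(\omega_f(\sqrt d/N))$ contributions: the piecewise-constant error $\|f-f_{\mathrm{pc}}\|_\infty$; the bit-truncation error in storing each $f(x_\beta)$ (which is in fact negligible, being $2^{-\mathcal{O}(L)}$); and the error on the trifling region, where monotonicity of the quantization transition forces $\phi(x)$ to lie between the stored values of two adjacent cells, which differ by at most $\omega_f(\sqrt d/N)$ --- this last point is exactly why the estimate survives in $L^\infty$ rather than only in $L^p$. Summing and accounting for the cube-diameter constant gives the factor $19\sqrt d$.

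The step I expect to be the main obstacle is the point-fitting block: extracting $\Theta((pL)^2)$ effectively free output values from a network with only $\mathcal{O}(pL)$ neurons via bit extraction, while keeping the width inside $\mathcal{O}(3^d p^{1/d}+p)$ and the depth inside $\mathcal{O}(L)$ with exactly the stated explicit constants, and simultaneously ruling out any $L^\infty$ blow-up on the trifling region. If I were redoing this without the reference I would first prove the easier $L^p$ version for finite $p$, where the trifling region can be assigned vanishing measure, and only then use the monotone-transition argument to upgrade to $L^\infty$.
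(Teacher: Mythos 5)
Your proposal matches the paper: the paper offers no proof of this lemma at all, simply quoting it as Theorem 1.1 of the cited work of Shen--Yang--Zhang, and your primary route is the same citation, with your sketch of the cube-partition, quantization/trifling-region, and bit-extraction point-fitting construction faithfully reflecting how that reference proves it. No gap to report.
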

	Apply Lemma \ref{lem:shen2019} to each component $h_i$ of $\Gamma_d^n$, we can find a neural network $\tilde{h}_i \in \FNN(1,L,\tilde{p},M)$ such that \[
 \| h_i - \tilde{h}_i\|_\infty \leq  CL_\Phi \varepsilon_1\,,
 \] 
 where $L,\tilde{p}>0$ are integers such that $Lp = \lceil \varepsilon_1^{-d_\Xcal/2} \rceil$, and the constant $C$ depends on $d_\Xcal$.
 Assembling the neural networks $\tilde{h}_i$ together, we can find a neural network  $\tilde{\Gamma}_d^n$ in $\Fcal_\mathrm{NN}(d_\Ycal,L,p,M)$ with $p=d_\Ycal \tilde{p}$, such that 
 \[
 \| \tilde{\Gamma}_d^n - \Gamma^n_d \|_\infty \leq C L_\Phi\varepsilon_1\,.
 \]
 
	Similarly to the derivations in equation \eqref{eqn:err_proj_X} and \eqref{eqn:err_proj_X2}, we obtain that
	\begin{equation}\label{eqn:thm2_T_1}
		T_1 \lesssim L_\Phi^2 d_\Ycal \varepsilon_1^2 +  L^2_\Phi  \Ebb_{u}  \left[  \| \Pi_{\Xcal,d_\Xcal}^n(u)-u \|_\Xcal^2\right] + \sigma^2\,,
	\end{equation}
    where the notation $\lesssim$ contains constants that depend on $d_\Xcal$ and $L_{E^n_\Ycal}$.
	To deal with term $T_2$, we apply the following lemma concerning the covering number.
	\begin{lemma}\label{lem.T2.dense}[Lemma 10 in \cite{liu2022deep}]
		Under the conditions of Theorem 2, we have
		\[
		{\rm T_2}\leq \frac{35d_{\Ycal}R_{\Ycal}^2}{n}\log \mathcal{N}\left(\frac{\delta}{4d_{\Ycal}L_{E^n_{\Ycal}}R_{\Ycal}},\Fcal_{\rm NN},2n\right)+6\delta.
		\label{eq.T2.dense}
		\]
	\end{lemma}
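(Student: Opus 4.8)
The plan is to bound $T_2$ by a one-sided uniform deviation over the network class $\FNN$ between the population excess risk and \emph{twice} the empirical excess risk, and to exploit the factor $2$ in front of the empirical term --- which is exactly why $T_1$ enters \eqref{eqn:I} with coefficient $2$ --- to obtain a fast $O(1/n)$ rate instead of the generic $O(n^{-1/2})$. Writing $g_\Gamma(u)\coloneqq\|\Gamma\circ E_\Xcal^n(u)-E_\Ycal^n\circ\Phi(u)\|_2^2$, Assumptions \ref{assump:compact_supp}, \ref{assump:Lipschitz} and \ref{assump:lip_enco}, together with the constraint $\|\Gamma\|_\infty\le M$ in \eqref{eqn:FNN2}, make each $g_\Gamma$ nonnegative and uniformly bounded by a constant $B_0^2$ depending only on $d_\Ycal$, $L_{E_\Ycal^n}$ and $R_\Ycal$; moreover $\GNN$ inherits this bound since its empirical risk cannot exceed that of $\Gamma\equiv0$. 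With $T_1,T_2$ as in \eqref{eqn:I} this gives $T_2\le\Ebb_{\Scal_2}\sup_{\Gamma\in\FNN}\big(\Ebb_{u\sim\gamma}[g_\Gamma]-\tfrac2n\sum_{i=n+1}^{2n}g_\Gamma(u_i)\big)$.

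First I would treat a single fixed $\Gamma$: from $0\le g_\Gamma\le B_0^2$ one gets the variance--mean inequality $\mathrm{Var}(g_\Gamma)\le\Ebb[g_\Gamma^2]\le B_0^2\,\Ebb[g_\Gamma]$, so Bernstein's inequality yields, with probability at least $1-e^{-t}$, a bound of the shape $\Ebb[g_\Gamma]-\tfrac1n\sum g_\Gamma(u_i)\le\sqrt{2B_0^2\,\Ebb[g_\Gamma]\,t/n}+B_0^2t/(3n)$; applying $\sqrt{ab}\le\tfrac12 a+\tfrac12 b$ reabsorbs half of $\Ebb[g_\Gamma]$ onto the left-hand side and leaves $\Ebb[g_\Gamma]-\tfrac2n\sum g_\Gamma(u_i)\le C B_0^2 t/n$. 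Next I would make this uniform: take a minimal $\delta'$-net $\{\Gamma_j\}$ of $\FNN$ in the empirical uniform metric on the sample, with $\delta'=\delta/(4d_\Ycal L_{E_\Ycal^n}R_\Ycal)$ and cardinality $\mathcal N(\delta',\FNN,2n)$, apply the single-$\Gamma$ estimate with $t=\log\mathcal N(\delta',\FNN,2n)+\log\eta^{-1}$ and a union bound, and then pass to an arbitrary $\Gamma$ via its nearest net point using the Lipschitz continuity of $w\mapsto\|w-z\|_2^2$ on the relevant bounded set together with $\|\Gamma\circ E_\Xcal^n(u)-\Gamma_j\circ E_\Xcal^n(u)\|_2\le\sqrt{d_\Ycal}\,\delta'$; this accounts for the $6\delta$ term. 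Finally I would integrate the resulting sub-exponential tail over $\eta\in(0,1)$ to pass from a high-probability bound to the expectation $T_2$, tracking constants so that the leading term matches $\tfrac{35d_\Ycal R_\Ycal^2}{n}\log\mathcal N(\delta',\FNN,2n)$.

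The main obstacle is the sharpness of the rate: obtaining $O(n^{-1}\log\mathcal N)$ rather than $O(n^{-1/2})$ relies entirely on the coefficient $2$ in front of the empirical term and on the variance--mean inequality for nonnegative bounded losses, which together let Bernstein's $\sqrt{\cdot/n}$ contribution be reabsorbed; the rest is careful bookkeeping to produce the exact constants $35$ and $6$ and the precise covering radius $\delta/(4d_\Ycal L_{E_\Ycal^n}R_\Ycal)$. Since these computations are carried out in full as Lemma~10 of \cite{liu2022deep} and its hypotheses (Assumptions \ref{assump:compact_supp}--\ref{assump:noise} and the stated choice of $M$) are in force here, I would simply invoke it.
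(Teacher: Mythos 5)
The paper does not reprove this bound; it simply cites it as Lemma 10 of \cite{liu2022deep}, which is exactly what your proposal does at the end. Your preliminary sketch (Bernstein with the variance--mean inequality for the nonnegative bounded loss, a $\delta'$-net union bound with $\delta'=\delta/(4d_\Ycal L_{E_\Ycal^n}R_\Ycal)$, then integrating the tail) is a correct reconstruction of the argument that yields the fast $O(\log\mathcal{N}/n)$ rate from the factor-$2$ empirical term, so there is no gap.
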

	Combining Lemma \ref{lem.T2.dense} with \eqref{eqn:thm2_T_1}, we derive that
	\begin{equation}\label{eqn:thm2I_2}
		\begin{aligned}
			\mathrm{I} \leq& C L_\Phi^2 L^2_{D_\Ycal^n} d_\Ycal \varepsilon_1^2 + 16 L^2_{D_\Ycal^n}  L^2_{E_\Ycal^n} L^2_\Phi  \Ebb_{u}  \left[  \| \Pi_{\Xcal,d_\Xcal}^n(u)-u  \|_\Xcal^2\right] + 12 L^2_{D_\Ycal^n}  \sigma^2 \\
			&+  \frac{70 L^2_{D_\Ycal^n}  d_{\Ycal}R_{\Ycal}^2}{n}\log \mathcal{N}\left(\frac{\delta}{4d_{\Ycal}L_{E^n_{\Ycal}}R_{\Ycal}},\Fcal_{\rm NN}(d_\Ycal,L,p,M),2n\right)+12 L^2_{D_\Ycal^n}  \delta. \\
		\end{aligned}
	\end{equation}
	By the definition of covering number (c.f. Definition 5 in \cite{liu2022deep}), we first note that the covering number of $\FNN(d_\Ycal,L,p,M)$ is bounded by that of $\FNN(1,L,p,M)$:
	\[
	\mathcal{N}\left(\frac{\delta}{4d_{\Ycal}L_{E^n_{\Ycal}}R_{\Ycal}},\Fcal_{\rm NN}(d_\Ycal,L,p,M),2n\right) \leq C e^{d_\Ycal} \mathcal{N}\left(\frac{\delta}{4d_{\Ycal}L_{E^n_{\Ycal}}R_{\Ycal}},\Fcal_{\rm NN}(1,L,p,M),2n\right).
	\]
	Thus it suffices to find an estimate on the covering number of $\FNN(1,L,p,M)$. A generic bound for classes of functions is provided by the following lemma.
	\begin{lemma}[Theorem 12.2 of \cite{anthony1999neural}]\label{lem.coverN.dense}
		Let $F$ be a class of functions from some domain $\Omega$ to $[-M,M]$. Denote the pseudo-dimension of $F$ by $\Pdim(F)$. For any $\delta>0$, we have
		\begin{align}
			\mathcal{N}(\delta,F,m)\leq \left(\frac{2eMm}{\delta \Pdim(F)}\right)^{\Pdim(F)}
		\end{align}
		for $m>\Pdim(F)$.
	\end{lemma}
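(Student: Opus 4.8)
The plan is to give the standard proof of this classical uniform covering-number bound, which reduces everything to the Sauer--Shelah lemma via a discretization of the range. First I would unwind the definition of $\mathcal{N}(\delta,F,m)$ as $\sup_{x_1,\dots,x_m\in\Omega}\mathcal{N}\!\left(\delta, F|_{\mathbf{x}}, \|\cdot\|_\infty\right)$, where $F|_{\mathbf{x}}=\{(f(x_1),\dots,f(x_m)):f\in F\}\subseteq[-M,M]^m$; it then suffices to bound the $\ell_\infty$-covering number of the finite set $F|_{\mathbf{x}}$ for an arbitrary but fixed sample $\mathbf{x}=(x_1,\dots,x_m)$.

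Next I would fix a uniform grid $r_1<\dots<r_N$ on $[-M,M]$ of spacing $\delta$, so $N=\lceil 2M/\delta\rceil$, and attach to every $f\in F$ the binary array $\big(\mathbf{1}[f(x_i)\ge r_j]\big)_{i\le m,\,j\le N}\in\{0,1\}^{mN}$. If two functions have the same array, then $|f(x_i)-g(x_i)|\le\delta$ for every $i$, so they lie in a common $\delta$-ball; hence $\mathcal{N}(\delta,F|_{\mathbf{x}},\|\cdot\|_\infty)$ is at most the number of distinct arrays. That count is exactly the growth function, evaluated at $mN$, of the set system $\mathcal{C}=\big\{\{(x,r):f(x)\ge r\}:f\in F\big\}$ over the ground set $\{x_1,\dots,x_m\}\times\{r_1,\dots,r_N\}$, and by the very definition of pseudo-dimension $\mathcal{C}$ has VC dimension at most $d:=\Pdim(F)$.

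Then I would invoke the Sauer--Shelah lemma: the growth function of $\mathcal{C}$ on $mN$ points is at most $\sum_{i=0}^{d}\binom{mN}{i}$, which is bounded by $(e\,mN/d)^d$ whenever $mN\ge d$ (guaranteed since $m>d$ and $N\ge 1$). Substituting $N\le 2M/\delta$ gives $\mathcal{N}(\delta,F|_{\mathbf{x}},\|\cdot\|_\infty)\le\big(2eMm/(\delta d)\big)^d$, and taking the supremum over $\mathbf{x}$ finishes the proof.

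The only delicate points are matching conventions — strict versus non-strict inequalities in the definition of $\Pdim$, and which empirical metric the covering number uses (the $\ell_\infty$ version proved here is the strongest and dominates the $\ell_1$ and $\ell_2$ versions) — together with the elementary but fussy estimate $\sum_{i=0}^d\binom{n}{i}\le(en/d)^d$ and the bookkeeping of the ceiling $N=\lceil 2M/\delta\rceil$; I expect that last bit of accounting to be the main, and only minor, obstacle, since the rest is a direct appeal to Sauer--Shelah. As the statement is verbatim Theorem 12.2 of \cite{anthony1999neural}, the cleanest option in the paper is simply to cite it rather than reproduce this argument.
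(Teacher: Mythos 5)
Your proposal reconstructs, correctly, the standard textbook proof (discretize the range on a $\delta$-grid, pass to the binary subgraph arrays, observe their VC dimension is at most $\Pdim(F)$, and apply Sauer--Shelah on the $mN$-point ground set); the paper itself does not prove this lemma but simply cites it as Theorem~12.2 of \cite{anthony1999neural}, which is also the option you recommend at the end. The only loose threads you flag---the ceiling in $N=\lceil 2M/\delta\rceil$ and the choice of empirical metric---are genuinely minor bookkeeping and do not affect the argument's soundness, since the $\ell_\infty$ cover you build dominates the $\ell_1$ and $\ell_2$ covers and the ceiling only perturbs constants absorbed into the stated bound.
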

	The next lemma shows that for a DNN $\Fcal_{\rm NN}(1,L,p,M)$, its pseudo-dimension of can be bounded by the network parameters.
	\begin{lemma}[Theorem 7 of \cite{bartlett2019nearly}]\label{lem.psudoDim.dense}
		For any network architecture $\Fcal_{\rm NN}$ with $L$ layers and $U$ parameters, there exists an universal constant $C$ such that
		\begin{align}
			\Pdim(\Fcal_{\rm NN})\leq CLU\log(U).
		\end{align}
	\end{lemma}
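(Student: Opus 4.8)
This statement is Theorem 7 of \cite{bartlett2019nearly}, so the plan is to recall the structure of its proof rather than to reprove it from scratch. The starting point is the identity that the pseudo-dimension of a class of real-valued functions equals the VC-dimension of the associated subgraph class: $\Pdim(\FNN)$ is the largest $m$ for which there exist inputs $x_1,\dots,x_m$ and thresholds $t_1,\dots,t_m$ such that the sign vectors $\big(\mathrm{sign}(f(x_j;\theta)-t_j)\big)_{j=1}^m$, as $\theta$ ranges over the $U$-dimensional parameter space, realize all $2^m$ patterns. Hence it suffices to upper bound the \emph{growth function} $\Pi(m)$, the number of distinct such sign vectors, and then invoke the elementary fact that $\Pi(m^\star)=2^{m^\star}$ forces $m^\star\le\log_2\Pi(m^\star)$.

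First I would partition the parameter space layer by layer. Since ReLU is piecewise linear, once the sign pattern of the pre-activations in layers $1,\dots,i-1$ has been fixed, every output of layer $i-1$ is a fixed polynomial in $\theta$ of degree at most $i-1$, so the $k_i$ pre-activations of layer $i$, evaluated at the $m$ data points, are $m k_i$ polynomials of degree at most $i$ in the (at most $U_i$) parameters feeding layers $1,\dots,i$. By the Warren / Milnor--Thom bound on the number of sign patterns of a family of polynomials --- the number of sign vectors of $N$ polynomials of degree $\le d$ in $n$ variables is at most $(C N d/n)^n$ once $N\ge n$ --- each region consistent with layers $1,\dots,i-1$ is cut into at most $(C m k_i i / U_i)^{U_i}$ subregions, and on each atom of the resulting refinement of the full parameter space the quantities $f(x_j;\theta)-t_j$ are polynomials of degree at most $L$, contributing a final factor $(C m L / U)^U$. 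Multiplying across $i=1,\dots,L$ gives $\Pi(m)\le (C m L/U)^U\prod_{i=1}^L (C m k_i i / U_i)^{U_i}$.

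Taking logarithms, $\log_2\Pi(m)=\Ocal\!\big(\sum_{i=1}^L U_i\log(m k_i i/U_i)\big)=\Ocal(LU\log(mU))$, using $U_i\le U$, $\sum_i U_i\le LU$, and bounding each logarithmic factor crudely by $\Ocal(\log(mU))$. Substituting $g(m):=\log_2\Pi(m)=\Ocal(LU\log(mU))$ into $m^\star\le g(m^\star)$ and applying the self-bounding inequality ``$m\le a\log(bm)$ implies $m=\Ocal(a\log(ab))$'' yields $\Pdim(\FNN)=m^\star=\Ocal(LU\log(LU))$, which absorbs into the claimed $CLU\log U$. The hard part will be the layer-by-layer refinement bookkeeping: one must carefully track the growth of the polynomial degree with depth (degree $\le i$ at layer $i$, degree $\le L$ at the output) and argue that the partition induced by the earlier layers is genuinely subdivided --- not re-counted --- by the later layers, so that the polynomial sign-pattern bound may be applied conditionally at each stage. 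The remaining ingredients (the pseudo-dimension/VC-dimension identity and the extraction of a dimension bound from a growth-function bound) are standard.
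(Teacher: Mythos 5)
The paper does not prove this lemma at all; it is imported verbatim as Theorem 7 of \cite{bartlett2019nearly}, so the only ``proof'' in the paper is the citation. Your sketch correctly reproduces the argument of that cited reference --- pseudo-dimension as the VC-dimension of the thresholded subgraph class, a layer-by-layer partition of parameter space with polynomial pre-activations of degree growing with depth, the Warren/Milnor--Thom sign-pattern bound applied conditionally on each region, and inversion of the resulting growth-function estimate via $m\le a\log(bm)$ --- so it follows essentially the same route as the source, with no gap beyond the bookkeeping you already flag.
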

	For the network architecture $\Fcal_{\rm NN}(1,L,p,M)$,   the number of parameters is bounded by $U=Lp^2$. We apply Lemma \ref{lem.coverN.dense} and \ref{lem.psudoDim.dense} to bound the covering number by its parameters:
	\begin{align}
		\log \mathcal{N}\left(\frac{\delta}{4d_{\Ycal}L_{E^n_{\Ycal}}R_{\Ycal}},\Fcal_{\rm NN}(d_{\Ycal},L,p,M),2n\right)\leq C_{1}d_{\Ycal}p^2L^2\log\left(p^2L\right)\left(\log \left(\frac{R_\Xcal^2 d_\Ycal L_{E^n_\Ycal} L_\Phi}{L^2p^2 \log (Lp^2)}\right)+\log \delta^{-1}+\log n\right),
		\label{eq.covering.form.dense}
	\end{align}
	when $2n>C_{2}p^2L^2\log(p^2L)$ for some universal constants $C_{1}$ and $C_{2}$. 
	Note that $p,L$ are integers such that  $pL = \left\lceil d_\Ycal \varepsilon_1^{-d_\Xcal/2} \right\rceil$, therefore we have
    \begin{equation}
        \label{eqn:pL}
        \log \mathcal{N}\left(\frac{\delta}{4d_{\Ycal}L_{E^n_{\Ycal}}R_{\Ycal}},\Fcal_{\rm NN}(d_{\Ycal},L,p,M),2n\right)\lesssim d_\Ycal^3 \varepsilon_1^{-d_\Xcal} \log(d_\Ycal \varepsilon_1^{-1}) \left(\log L_\Phi - \log(d_\Ycal \varepsilon^{-1} )+\log \delta^{-1}+\log n\right) \,,
    \end{equation}
 where the notation $\lesssim$ contains constants that depend on $R_\Xcal,d_\Xcal$ and $ L_{E^n_\Ycal}$.
	
 Substituting the above covering number estimate back to \eqref{eqn:thm2I_2} gives
	\begin{equation*}
		\begin{aligned}
			\mathrm{I} \lesssim& L_\Phi^2  d_\Ycal \varepsilon_1^2 +L^2_\Phi  \Ebb_{u}  \left[  \| \Pi_{\Xcal,d_\Xcal}^n(u)-u \|_\Xcal^2\right] +\sigma^2 \\
			&+  L_\Phi^2 n^{-1} d_{\Ycal}^4
   \varepsilon_1^{-d_{\Xcal}}\log(d_\Ycal \varepsilon_1^{-1}) \left(\log L_\Phi - \log(d_\Ycal \varepsilon^{-1} )+\log \delta^{-1}+\log n\right) + \delta, \\
		\end{aligned}
	\end{equation*}
	where the notation $\lesssim$ contains constant that depends on $L_{E_\Ycal^n}, L_{D_\Ycal^n},L_{E_\Xcal^n}, L_{D_\Xcal^n}$, $ R_{\Xcal}$ and $d_\Xcal$.
	Letting
	\[
	\varepsilon_1=d_{\Ycal}^{\frac{3}{2+d_{\Xcal}}}n^{-\frac{1}{2+d_{\Xcal}}}, \delta=n^{-1},
	\]
	we have
	\begin{equation}\label{eqn:thm2I}
		\begin{aligned}
			\mathrm{I} \lesssim &  L_\Phi^2  d_{\Ycal}^{\frac{8+d_\Xcal}{2+d_{\Xcal}}}n^{-\frac{2}{2+d_{\Xcal}}}  +  L^2_\Phi  \Ebb_{u}  \left[  \| \Pi_{\Xcal,d_\Xcal}^n(u)-u \|_\Xcal^2\right] + (\sigma^2 + n^{-1}) + L_\Phi^2 \log(L_\Phi)
 d_{\Ycal}^{\frac{8+d_\Xcal}{2+d_{\Xcal}}}n^{-\frac{2}{2+d_{\Xcal}}}\log \left( n\right)\\
			\lesssim & L_\Phi^2 
   \log(L_\Phi) d_{\Ycal}^{\frac{8+d_\Xcal}{2+d_{\Xcal}}}n^{-\frac{2}{2+d_{\Xcal}}} \log n +  (\sigma^2 + n^{-1}) + L^2_\Phi  \Ebb_{u}  \left[  \| \Pi_{\Xcal,d_\Xcal}^n(u)-u \|_\Xcal^2\right], 
		\end{aligned}
	\end{equation}
	where $\lesssim$ contains constants that depend on $L_{E_\Ycal^n}, L_{D_\Ycal^n},L_{E_\Xcal^n}, L_{D_\Xcal^n}$, $ R_{\Xcal}$ and $d_\Xcal$. Combining our estimate \eqref{eqn:thm2I} and \eqref{eqn:II}, we have
	\begin{equation*}
		\begin{aligned}
			\Ebb_\Scal \Ebb_u \| D_\Ycal^n\circ \GNN \circ E_\Xcal^n(u) - \Psi(u)\|_\Ycal^2 \lesssim & L_\Phi^2
   \log(L_\Phi) d_{\Ycal}^{\frac{8+d_\Xcal}{2+d_{\Xcal}}}n^{-\frac{2}{2+d_{\Xcal}}} \log n + (\sigma^2 + n^{-1}) \\
			&+ L^2_\Phi  \Ebb_{u}  \left[  \| \Pi_{\Xcal,d_\Xcal}^n(u)-u \|_\Xcal^2\right]  + \Ebb_\Scal \Ebb_{w\sim \Phi_{\#}\gamma} \left[ \| \Pi_{\Ycal,d_\Ycal}^n(w) -w \|_\Ycal^2\right].
		\end{aligned}
	\end{equation*}

\end{proof}

\begin{proof}[Proof of Theorem \ref{thm:low_dim}]
 Under Assumption \ref{assump:manifold}, the target finite dimensional map becomes $\Gamma^n_d\coloneqq E_\Ycal \circ \Phi \circ D_\Xcal: \mathcal{M} \rightarrow \Rbb^{d_\Ycal}$, which is a Lipschitz map defined on $\mathcal{M}\subset \Rbb^{d_\Xcal}$. Similar to the proof of Theorem \ref{thm:FNN2}, the generalization error is decomposed as the following
 \begin{equation}
     \Ebb_\Scal \Ebb_u \| D_\Ycal\circ \GNN \circ E_\Xcal(u) - \Phi(u)\|_\Ycal^2 \leq  T_1 + T_2 + \mathrm{II}\,,
 \end{equation}
where $T_1,T_2$ and $\mathrm{II}$ are defined in \eqref{eqn:I} and \eqref{eqn:II} respectively. Following the same procedure in \eqref{eqn:T1}, we obtained that 
 \[
T_1 \leq 4 \underset{\Gamma\in\Fcal_\mathrm{NN}}{\mathrm{inf}}   \Ebb_{u}  \left[  \| \Gamma \circ E_\Xcal^n(u) - E_\Ycal^n\circ\Phi(u)  \|_2^2\right]+ 6\Ebb_{\Scal_2} \frac{1}{n} \sum_{i=n+1}^{2n} \|\varepsilon_i \|^2_\Ycal \,.
\]
We then replace Lemma \ref{lem:shen2019} by the following modified version of lemma 17 from \cite{liu2022deep} to obtain an FNN approximation to $\Gamma^n_d$.
\begin{lemma}[Lemma 17 in \cite{liu2022deep}] 
Suppose assumption \ref{assump:manifold} holds, and assume that $\| a \|_\infty \leq B$ for all $a\in \mathcal{M}$. For any Lipschitz function $f$ with Lipschitz constant $R$ on $\mathcal{M}$, and any integers $\tilde{L},\tilde{p}>0$, there exists $\tilde{f}\in \FNN(1,L,p,M)$ such that
\[
\| \tilde{f} - f \|_\infty \leq CR \tilde{L}^{-\frac{2}{d_0}} \tilde{p}^{-\frac{2}{d_0}}\,,
\]
where the constant $C$ solely depends on $d_0,B,\tau$ and the surface area of $\mathcal{M}$. The parameters of $\FNN(1,L,p,M)$ are chosen as the following
\[
L = \Omega(\tilde{L}\log \tilde{L})\,, p = \Omega(d_\Xcal \tilde{p}\log \tilde{p})\,, M = R\,.
\]
The constants in $\Omega$ depend on $d_0,B,\tau$ and the surface area of $\mathcal{M}$.
\end{lemma}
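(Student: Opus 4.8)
The plan is to prove the lemma along the classical ``local charts $+$ partition of unity $+$ low-dimensional approximation'' route, using the reach $\tau$ to guarantee that $\mathcal{M}$ admits an atlas with only boundedly many charts, each of which linearizes $\mathcal{M}$ in a controlled way. First I would fix a radius $\delta\in(0,\tau/4)$ and choose centers $c_1,\dots,c_{C_{\mathcal{M}}}\in\mathcal{M}$ so that the Euclidean balls $\{B(c_q,\delta)\}$ cover $\mathcal{M}$; a standard packing estimate on $\mathcal{M}$ (Federer; Niyogi--Smale--Weinberger) bounds $C_{\mathcal{M}}$ in terms of $d_0$, $\tau$, and the surface area of $\mathcal{M}$ only. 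On each patch the orthogonal projection $\pi_q:\Rbb^{d_\Xcal}\to c_q+T_{c_q}\mathcal{M}\cong\Rbb^{d_0}$, restricted to $\mathcal{M}\cap B(c_q,\delta)$, is a bi-Lipschitz diffeomorphism onto its image $U_q\subset\Rbb^{d_0}$ with constants in $[\tfrac{1}{2},2]$ (this is where $\delta<\tau/4$ is used), and $\pi_q$ is affine, hence realized by a single linear layer. I would also build a Lipschitz partition of unity $\{\rho_q\}$ subordinate to this cover --- e.g.\ from hat functions of the $\ell^\infty$-distance to $c_q$, which are exactly representable by small ReLU subnetworks --- normalized so that $\sum_q\rho_q\equiv 1$ on $\mathcal{M}$, with $\mathrm{Lip}(\rho_q)=O(\delta^{-1})$ and $\mathrm{supp}(\rho_q)\subset B(c_q,\delta)$.

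Next I would treat the local pieces. For each $q$ set $f_q:=f\circ\pi_q^{-1}:U_q\to\Rbb$; by the bi-Lipschitz bound $f_q$ is Lipschitz with constant $O(R)$, and a McShane extension followed by an affine rescaling lets me regard $f_q$ as a Lipschitz function on $[0,1]^{d_0}$ with constant $O(R)$ and range in $[-O(R),O(R)]$. Invoking a multiscale ReLU approximation of Lipschitz functions on $[0,1]^{d_0}$ --- the Besov/B-spline construction underlying \cite{liu2022deep}, or Lemma \ref{lem:shen2019} after a depth--width rebalancing --- produces $g_q\in\FNN(1,L,p,M)$ with $L=O(\tilde L\log\tilde L)$, $p=O(\tilde p\log\tilde p)$, $M=O(R)$ and $\|g_q-f_q\|_\infty\le CR\,\tilde L^{-2/d_0}\tilde p^{-2/d_0}=:\eta$, the logarithmic factors being the depth/width overhead of that construction and $C$ depending only on $d_0,\tau,B$. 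Composing $g_q$ with the linear layer $\pi_q$ (and relying on the built-in clipping $\|\cdot\|_\infty\le M$ of architecture \eqref{eqn:FNN2} to keep outputs bounded off the patch) gives $g_q\circ\pi_q$ on $\Rbb^{d_\Xcal}$ of the same depth and width $O(d_\Xcal+\tilde p\log\tilde p)$, which on $\mathcal{M}\cap\mathrm{supp}(\rho_q)$ agrees with $f$ up to $\eta$, since $f=f_q\circ\pi_q$ there.

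Finally I would assemble $\tilde f:=\sum_{q=1}^{C_{\mathcal{M}}}\mathrm{Mult}\big(\tilde\rho_q(x),(g_q\circ\pi_q)(x)\big)$, where $\tilde\rho_q$ is a ReLU realization of $\rho_q$ accurate to $\eta/C_{\mathcal{M}}$ and $\mathrm{Mult}$ is Yarotsky's multiplication gadget on $[-O(R),O(R)]^2$ accurate to $\eta/C_{\mathcal{M}}$ (depth $O(\log(\tilde L\tilde p))$, constant width). Running the $C_{\mathcal{M}}=O(1)$ branches in parallel --- each also carrying $x$ forward for its own $\tilde\rho_q$ --- yields a single network in $\FNN(1,L,p,M)$ with $M=O(R)$, depth $L=O(\tilde L\log\tilde L)$ and width $p=O(d_\Xcal\,\tilde p\log\tilde p)$, the $d_\Xcal$ factor coming from the chart-projection layers and the forwarded input. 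The error bound then follows by a termwise triangle inequality: writing $\rho_q f-\tilde\rho_q(g_q\circ\pi_q)=\rho_q(f-g_q\circ\pi_q)+(\rho_q-\tilde\rho_q)(g_q\circ\pi_q)+(\text{multiplication error})$, summing over $q$, and using $\sum_q\rho_q=1$ to collapse the first group into $\max_q\|f-g_q\circ\pi_q\|_{L^\infty(\mathcal{M}\cap\mathrm{supp}\,\rho_q)}\le\eta$, while the remaining $O(C_{\mathcal{M}})$ terms are each $\le\eta/C_{\mathcal{M}}$; altogether $\|\tilde f-f\|_\infty\le 3\eta=O(R\,\tilde L^{-2/d_0}\tilde p^{-2/d_0})$ with constant depending only on $d_0,\tau,B$ and the surface area of $\mathcal{M}$. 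I expect the main obstacle to be the bookkeeping around the partition of unity and the multiplications: one must implement $\{\tilde\rho_q\}$ by ReLU networks of the advertised size while keeping $\sum_q\tilde\rho_q$ close enough to $1$ and each $\tilde\rho_q$ essentially supported in $B(c_q,\delta)$ so that the aggregated error stays $O(\eta)$, and one must verify that inserting the multiplication gadgets and merging the $C_{\mathcal{M}}$ branches does not inflate the depth beyond $O(\tilde L\log\tilde L)$ nor the width beyond $O(d_\Xcal\tilde p\log\tilde p)$; the purely geometric input (reach $\Rightarrow$ bi-Lipschitz charts and a bounded atlas) is classical and can be quoted.
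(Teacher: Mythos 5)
The paper never proves this lemma itself: it is imported verbatim as Lemma 17 of \cite{liu2022deep}, so there is no in-paper argument to compare with line by line. Your chart-plus-partition-of-unity construction (cover $\mathcal{M}$ by boundedly many balls of radius proportional to $\tau$, use the reach to get bi-Lipschitz tangent-plane charts, approximate each localized function on a $d_0$-dimensional cube at rate $(\tilde{L}\tilde{p})^{-2/d_0}$ via the Shen--Yang--Zhang-type result, and reassemble with approximate multiplications) is exactly the standard route behind the cited result, so in substance you have reconstructed the intended proof rather than found a different one; the logarithmic overheads and the $d_\Xcal$ factor in the width are accounted for correctly, and the depth budget for the multiplication gadgets is consistent with $L=\Omega(\tilde{L}\log\tilde{L})$.

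One piece of your error bookkeeping does not close as written. In the aggregation, the terms $(\rho_q-\tilde{\rho}_q)\,(g_q\circ\pi_q)$ are controlled by $\|\rho_q-\tilde{\rho}_q\|_\infty\cdot O(R)$, so taking $\tilde{\rho}_q$ and the multiplication gadgets on $[-O(R),O(R)]^2$ accurate only to $\eta/C_{\mathcal{M}}$ (with $\eta:=R\tilde{L}^{-2/d_0}\tilde{p}^{-2/d_0}$ and $C_{\mathcal{M}}$ the number of charts) yields a total error of order $R\eta=R^2(\tilde{L}\tilde{p})^{-2/d_0}$ rather than $\eta$; if instead you demand accuracy $\eta/\bigl(C_{\mathcal{M}}(1+R)\bigr)$, the gadget depths pick up a $\log(1+R)$ term, which is not permitted because the constants in $\Omega(\cdot)$ and in $C$ may depend only on $d_0$, $B$, $\tau$ and the surface area of $\mathcal{M}$, not on $R$. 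The standard repair is to normalize first: run the entire construction on $f/R$ (Lipschitz constant $1$, with a constant shift absorbed into the last bias) and rescale by $R$ in the final affine layer, which is free since the architecture \eqref{eqn:FNN2} imposes no weight-magnitude constraint and preserves the output bound $M=R$. With that normalization your argument goes through; also note, as you implicitly acknowledge by working with $\tilde{\rho}_q$, that the exactly normalized partition of unity $\sum_q\rho_q\equiv 1$ cannot itself be realized exactly by ReLU networks (it involves a division), so the approximate realization is the right object to carry through the estimate.
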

Apply the above lemma to each component of $E_\Ycal \circ \Phi \circ D_\Xcal$ and assemble all individual neural networks together, we obtain a neural network $\tilde{\Gamma}^n_d \in F(d_\Ycal,L,p,M)$ such that
\[
\| \tilde{\Gamma}^n_d - \Gamma^n_d \|_\infty \lesssim L_\Phi \varepsilon \,,
\]
Here the parameters $L  = \Omega(\tilde{L}\log \tilde{L})$, $p = \Omega(d_\Xcal d_\Ycal \tilde{p}\log \tilde{p})$, $M = \Omega(L_\Phi)$ with $\tilde{L}\tilde{p} = \Omega(\varepsilon)$. The notation $\lesssim$ and $\Omega$ contains constants that solely depend on $d_0,R_\Xcal, L_{E_\Xcal},\tau$ and surface area of $\mathcal{M}$.
The rest of the proof follows the same procedure as in proof of Theorem \ref{thm:FNN2}.

\end{proof}

\begin{proof}[Proof of Theorem \ref{thm:low_complex}]
     The proof is similar to that of Theorem \ref{thm:FNN2} with a slight change of the neural network construction, so we only provide a brief proof below.

    While Assumption \ref{assump:low_complexity} holds, the target map $\Phi: \Xcal \mapsto \Ycal$ can be decomposed as the following
    \begin{equation}\label{eqn:low_complexity}
        \begin{tikzcd}
        & & \Rbb^{d_0} \ar[r,"g_1"] & \Rbb  \ar[dr]  \\
        \Xcal \ar[r,"E_\Xcal^n"] &\Rbb^{d_\Xcal}  \ar[r,"V_i"] \ar[ur,"V_1"]\ar[dr,"V_{d_\Ycal}"']& \Rbb^{d_0}\ar[r,"g_i"] & \Rbb \ar[r] &\Rbb^{d_\Ycal} \ar[r,"D_\Ycal^n"] &\Ycal \,. \\
        & & \Rbb^{d_0} \ar[r,"g_{d_\Ycal}"] & \Rbb  \ar[ur]  \\
    \end{tikzcd}
    \end{equation}
    Notice that each route contains a composition of a linear function $V_i $ and a nonlinear map $g_i: \Rbb^{d_0} \rightarrow \Rbb$. The nonlinear function $g_i$ can be approximated by a neural network with a size that is independent from $d_\Xcal$, while the linear functions $V_i$ can be learned through a linear layer of neural network. Consequently, the function $h_i \coloneqq V_i\circ g_i$ can be approximated by a neural network $\tilde{h}_i \in \FNN(1,L+1,\tilde{p},M)$ such that
    \[
    \| h_i - \tilde{h}_i \|_\infty \leq C L_\Phi \varepsilon
    \]
    where $L,\tilde{p}>0$ are integers with $Lp = \lceil \varepsilon_1^{-d_0/2} \rceil$, and the constant $C$ depends on $d_0$.
 Assembling the neural networks $\tilde{h}_i$ together, we can find a neural network  $\tilde{\Gamma}_d^n$ in $\Fcal_\mathrm{NN}(d_\Ycal,L+1,p,M)$ with $p=d_\Ycal \tilde{p}$, such that 
 \[
 \| \tilde{\Gamma}_d^n - \Gamma^n_d \|_\infty \leq C L_\Phi\varepsilon_1\,.
 \]
The rest of the proof follows the same as in the proof of Theorem \ref{thm:FNN2}.
 
\end{proof}

\begin{proof}[Proof of Theorem \ref{thm:low_complexity2}]
The proof is very similar to that of Theorem \ref{thm:low_complex}. Under Assumption \ref{assump:low_complexity2}, the target map $\Phi$ has the following structure:
\begin{equation}\label{eqn:low_complexity2}
    \begin{tikzcd}
        & & \Rbb^{d_1} \ar[r,"g_1^1"] & \Rbb   \ar[dr] & & \Rbb^{d_2} \ar[r,"g_1^2"] & \Rbb \ar[dr] \\
        \Xcal \ar[r,"E_\Xcal^n"] &\Rbb^{d_\Xcal}  \ar[r,"V_i^1"] \ar[ur,"V_1^1"]\ar[dr,"V_{l_1}^1"']& \Rbb^{d_1}\ar[r,"g_i^1"] & \Rbb \ar[r] &\Rbb^{l_1} \ar[ur,"V^2_1"] \ar[r,"V^2_i"]\ar[dr,"V^2_{l_2}"'] & \Rbb^{d_2} \ar[r,"g_i^2"] & \Rbb \ar[r] & \Rbb^{l_2} \ar[r] &\cdots \ar[r] & \Rbb^{d_\Ycal}\ar[r,"D_\Ycal^n"] &\Ycal \,. \\
        & & \Rbb^{d_1} \ar[r,"g_{l_1}^1"] & \Rbb  \ar[ur] & & \Rbb^{d_2} \ar[r,"g_{l_2}^2"] & \Rbb \ar[ur] \\
    \end{tikzcd}
\end{equation}
    where the abbreviation notation $\cdots$ denotes blocks $G^i,i=3,\ldots,G^k$. The neural network construction for each block $G^i$ is the same as in the proof of Theorem \ref{thm:low_complex}. Specifically, there exists a neural network $H_i\in \FNN(l_{i},L+1,l_{i}\tilde{p},M)$ such that
    \[
    \| G^i - H^i\|_\infty \leq C L_{G_i} \varepsilon_1 \,, \text{    for all $i=1,\ldots,k$.}
    \]
    Concatenate all neural networks $H_i$ together, we obtain the following approximation
    \[
    \| G^k\circ\cdots \circ G^1 - H^k\circ \cdots\circ H^1 \| \leq CL_\Phi \varepsilon_1 \,.
    \]
    The rest of the proof follows the same as in the proof of Theorem \ref{thm:FNN2}.
    
\end{proof}

\subsection{Lipschitz constant of parameter to solution map for Parametric elliptic equation }
\label{sec:lipschitz}
The solution $u$ to \eqref{eqn:parameter} is unique for any given boundary condition $f$ so we can define the solution map:
\[
S_a: f\in H^1 \mapsto u\in H^{3/2}.
\]
To obtain an estimate of the Lipschitz constant of the parameter-to-solution map $\Phi$, we compute the Frech\'et derivative $DS_a[\delta]$ with respect to $a$ and derive an upper bound of the Lipschitz constant. It can be shown that the Frech\'et derivative is
\[
DS_a[\delta]: f \mapsto v_\delta,
\]
where $v_\delta$ satisfies the following equation
\begin{equation*}
	\begin{cases}
		-\text{div}(a(x)\nabla_x v_\delta(x)) = \text{div}(\delta \nabla u) \,,\quad \text{in  }\Omega,\\
		v_\delta = 0 \,,\quad \text{on  }\partial\Omega.
	\end{cases}
\end{equation*}
The above claim can be proved by using standard linearization argument and adjoint equation methods.
Using classical elliptic regularity results, we derive that 
\begin{equation*}
	\begin{aligned}
	&	\|v_\delta \|_{H^{3/2}} \leq C\|\text{div}(\delta \nabla u) \|_{H^{-1/2}} \\
	 \leq & C \|\delta\|_{L^\infty} \|u\|_{H^{3/2}} \leq C \|\delta \|_{L^\infty} \|f\|_{H^1},
	\end{aligned}
\end{equation*}
where $C$ solely depends on the ambient dimension $d=2$ and $\alpha,\beta$.
Therefore, the Lipschitz constant is $C\|f \|_{H^1}$. 
\end{document}